\newcommand{\m}[1]{[\![#1]\!]}
\newcommand{\ms}[1]{[\![#1]\!]}
\newcommand{\pms}[1]{[\![\![#1]\!]\!]}
\newcommand{\ams}[1]{(\!(#1)\!)}
\newcommand{\pams}[1]{(\!(\!(#1)\!)\!)}
\newcommand{\cent}{\mathrel{\scalebox{1}[1.5]{$\shortmid$}\mkern-3.1mu\raisebox{0.1ex}{$=$}}}
\newcommand{\ent}{\mathrel{\scalebox{1}[1.5]{$\shortmid$}\mkern-3.1mu\raisebox{0.1ex}{$\equiv$}}}
\DeclareMathOperator*{\argmax}{arg\,max}
\title{Inference of Abstraction for Grounded Predicate Logic} %TODO Please add
\author{Hiroyuki Kido}{Cardiff University, Park Place, Cardiff, CF10 3AT, United Kingdom}{KidoH@cardiff.ac.uk}{https://orcid.org/0000-0002-7622-4428}{}
\authorrunning{Hiroyuki Kido}
\keywords{Abstraction, Symbol grounding, Reasoning and learning, Generative models, Knowledge acquisition bottleneck, Top-down and bottom-up processing, Bayesian brain} %TODO mandatory; please add comma-separated list of keywords
\begin{document}

\maketitle

%TODO mandatory: add short abstract of the document
\begin{abstract}
An important open question in AI is what simple and natural principle enables a machine to reason logically for meaningful abstraction with grounded symbols. This paper explores a conceptually new approach to combining probabilistic reasoning and predicative symbolic reasoning over data. We return to the era of reasoning with a full joint distribution before the advent of Bayesian networks. We then discuss that a full joint distribution over models of exponential size in propositional logic and of infinite size in predicate logic should be simply derived from a full joint distribution over data of linear size. We show that the same process is not only enough to generalise the logical consequence relation of predicate logic but also to provide a new perspective to rethink well-known limitations such as the undecidability of predicate logic, the symbol grounding problem and the principle of explosion. The reproducibility of this theoretical work is fully demonstrated by the included proofs.
\end{abstract}
%%%%%%%%%%%%%%%%%%%%%%%%%%%%%%%%%%%%%%%%%%%%%%%%%%%%%%%%%%%%%%%%%%%%%%%%%%%%%%%%%%%%%%%%%%%%%%%%%%%%%%%%%%%%%%%%%%%%%%%%%%%%%%%%%%%%%%%%%%%%%%%%%%%%%%%%%%%%%%%%%%%%%%%%%%%%%%%%%%%%%%%%%%%%%%%%%%%%%%%%%%%%%%%%%%%%%%%%%%%%%%%%%%%%%%%%%%%%%
\section{Introduction}\label{sec1}
The current artificial intelligence (AI) systems such as large language models (LLMs) \cite{ChatGPT,DeepSeek} demonstrate a surprising linguistic ability in both what they know and how they articulate it. However, the common view is that they are still not as capable as ordinary people in several areas such as logical reasoning and abstract reasoning. For logical reasoning, it is unlikely to believe that the statistical patterns an AI algorithm extracts from finite training data can capture the infinite set of rules of valid inference studied in formal logic. For abstract reasoning, it is still unclear how a machine should explore and discover abstract concepts and principles from the real world in its own way. Consider the following problems requiring both abstract reasoning and logical reasoning skills.
\begin{example}\label{sec1:ex1}
Carol remembers the following three scenes.
\begin{itemize}
\item Alice and Bob did not blame each other.
\item One day Alice blamed Bob, and she blamed herself afterwards.
\item Alice and Bob blamed each other on another day.
\end{itemize}
One day Carol wants to blame Bob. She hesitated it because she learnt that someone will blame those who blame anyone, which can be expressed in a predicate language as follows.
\begin{eqnarray*}
\forall x\exists y~Blames(x,y)\to\exists z~Blames(z,x)
\end{eqnarray*}
\end{example}
\begin{example}\label{sec1:ex2}
Consider the following three data. What number fits in the blank?
\begin{center}
\includegraphics[scale=0.3]{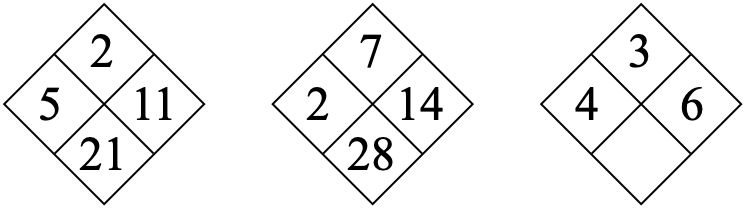}
\end{center}
The correct number could be 18 as the following predicate knowledge can be extracted.
\begin{eqnarray*}
\it{top}\times\it{left}+\it{right}=\it{bottom}
\end{eqnarray*}
\end{example}
Interestingly, the current AI systems such as ChatGPT \cite{ChatGPT} and DeepSeek \cite{DeepSeek} often answer them incorrectly. Their failure is mainly due to a lack of abstract or logical reasoning skills, but not a lack of arithmetic skills. Abstraction and Reasoning Corpus (ARC) \cite{Chollet:19} is a benchmark test designed to test machine's ability to extract graphical patterns from images, but not the intellectual ones shown above.
\par
In this paper, we ask how logical reasoning in predicate logic emerges from reasoning based on data. The underlying idea discussed in this paper is abstraction. Roughly speaking, it is about an inferential process of deriving intrinsically abstract symbols from intrinsically concrete data through selective ignorance. It is not about generalisation where typical inferential processes, e.g., deductive reasoning, is used backward for general rules from specific examples or facts. This type of reasoning is intensively studied as inverse resolution \cite{Muggleton:88,Nienhuys:97}, inverse deduction \cite{Russell:20} and inverse entailment \cite{Muggleton:95} mainly in inductive logic programming (ILP) \cite{Nienhuys:97}. It is either not about parametric learning where intrinsically concrete data are assumed to be generated from parameters of a probability distribution. This idea is prevalent in various applications of machine learning and statistics, e.g., \cite{Bishop:06,Tenenbaum:06,Dasgupta:20,Lake:15,Lake:17}. Abstraction is rather relevant to top-down (memory/experience-driven) and bottom-up (sensory-driven) information processing used by neuroscientists and AI researchers as a metaphor for the cognitive process of biological brains, e.g., \cite{pearl:03,Harnad:90,lee:03,Hawkins:21,Gregory:97,Rao:99,friston:10}.
\par
In this paper, we extend the inference of propositional abstraction \cite{kido:24-1,kido:24-2} to the inference of predicative abstraction towards enhanced human-like machine intelligence. The key idea is to use the property of predicate logic and expand the joint probability distribution over data, models of predicate logic and predicate formula, denoted by $D$, $M$ and $\alpha$, respectively, as follows.
\begin{eqnarray*}
&p(D,M,\alpha)=p(\alpha|M)p(M|D)p(D).
\end{eqnarray*}
The right-hand side realises the idea that a formula is an abstraction, or selective ignorance, of models and each of the models is an abstraction of data.
\par
We show that the inference of predicative abstraction serves as a solution to simple yet important problems such as Examples \ref{sec1:ex1} and \ref{sec1:ex2}. The research is not as straightforward as we think because the semantics of predicate logic needs a reformulation in accordance with abstraction. Our theory assumes only closed formulas, i.e., predicate formulas without free variables, to balance the expressiveness and simplicity of the theory. The contributions of this paper are summarised as follows.
\begin{itemize}
\item We introduce a simple theory of inference that opens up the possibility of combining probability theory and predicate logic in a data-driven manner. Predicate reasoning in our theory always proceeds between data and predicate formulas. This suggests a shift in the traditional view that predicate reasoning proceeds between predicate formulas via rules of inference (see Section \ref{sec2}).
\item The theory allows us to see the traditional model-based predicate reasoning as a special case of data-based predicate reasoning studied in this paper. The data-based perspective provides a new opportunity to rethink some existing limitations such as the undecidability of predicate logic, the symbol grounding problem \cite{Harnad:90,Russell:20} and commonsense reasoning \cite{Brewka:91,Davis:15} (see Sections \ref{sec31}, \ref{sec32} and \ref{sec33}).
\item We demonstrate a solution to simple yet essential problems that are often difficult to solve by existing established approaches (see Section \ref{sec34}).
\end{itemize}
%%%%%%%%%%%%%%%%%%%%%%%%%%%%%%%%%%%%%%%%%%%%%%%%%%%%%%%%%%%%%%%%%%%%%%%%%%%%%%%%%%%%%%%%%%%%%%%%%%%%%%%%%%%%%%%%%%%%%%%%%%%%%%%%%%%%%%%%%%%%%%%%%%%%%%%%%%%%%%%%%%%%%%%%%%%%%%%%%%%%%%%%%%%%%%%%%%%%%%%%%%%%%%%%%%%%%%%%%%%%%%%%%%%%%%%%%%%%%%%%%%%%%%%%%%%%%%%%%%%%%%%%%%%%%%%%%%%%%%%%%%%%%%%%%%%%%%%%%%%%%%%%%%%%%%%%%%%%%%%%%%%%%%%%%%%%%%%%%%%%%%%%%%%%%%%%%%%%%%%%%%%%%%%%%%%%%%%%%%%%%%%%%%%%%%%%%%%%%%%%%%%%%%%%%%%%%%%%%%%%%%%%%%%%%%%%%%%%%%%%%%%%%%%%%%%%%%%%%%%%%%%%%%%%%%%%%%%%%%%%%%%%%%%%%%%%%%%%%%%%%%%%%%%%%%%%%%%%%%%%%%%%%%%%%%%%
\section{Proposals}\label{sec2}
\subsection{Data support models}\label{sec21}
\begin{figure*}[t]
  \begin{center}
   \includegraphics[scale=0.28]{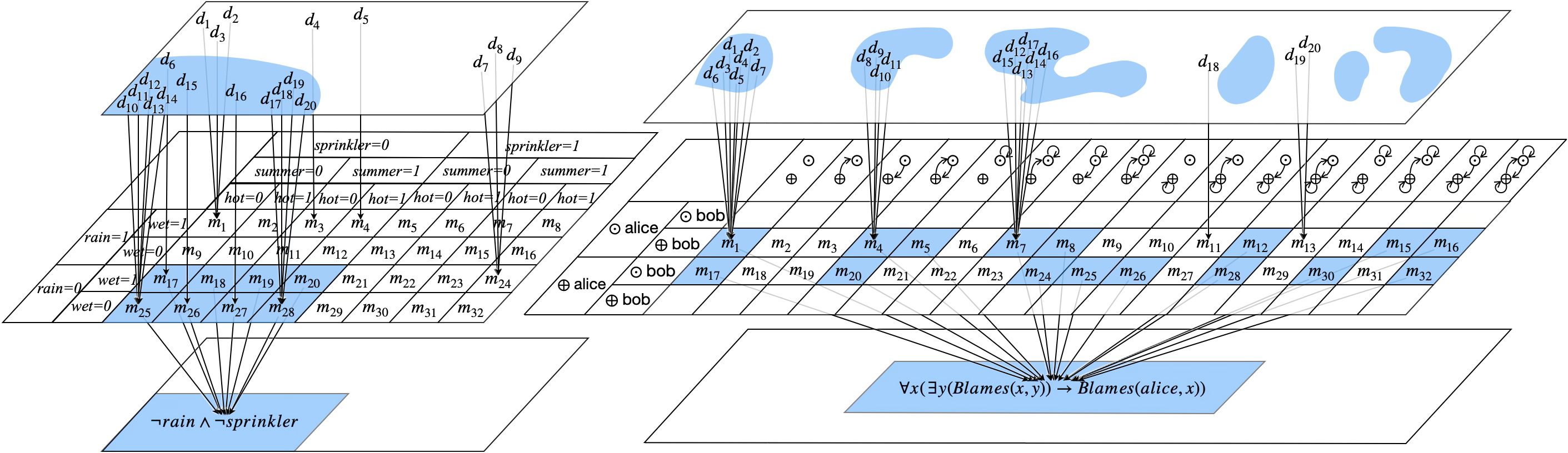}
  \end{center}
\caption{The hierarchy shown on the left is our illustration of the existing work on the inference of propositional abstraction \cite{kido:24-1,kido:24-2}. The one shown on the right is an illustration of our work on the inference of predicative abstraction. The top layers are both distributions of data. The middle layer on the left is a distribution of models in propositional logic, i.e., valuations. The one on the right is a distribution of models in predicate logic, i.e., pairs of domains of discourse and valuation functions. The bottom layer on the left is a distribution of the truth values of the propositional formula, whereas the one on the right is the same type of distribution for the predicate formula.}
\label{sec3:fig1}
\end{figure*}
The inference of abstraction for propositional logic \cite{kido:24-1,kido:24-2} is insufficient to handle problems like Examples \ref{sec1:ex1} and \ref{sec1:ex2}. We thus propose the inference of abstraction for predicate logic in this section.
Let $\{d_{1},d_{2},...,d_{K}\}$ be a multiset of $K$ data and $D$ be a random variable for data taking values from $\{d_{1},d_{2},...,d_{K}\}$.
\begin{definition}\label{sec3:def01}
Let $d_{k}\in\{d_{1},d_{2},...,d_{K}\}$. Th probability of $d_{k}$, denoted by $p(D=d_{k})$, is defined as follows.
\begin{align}
p(D=d_{k})=\frac{1}{K}\label{sec3:eq1}
\end{align}
\end{definition}
Namely, $p(D)$ is a uniform distribution. Let ${\cal C}$, ${\cal V}$, ${\cal F}$ and ${\cal P}$ be the sets of constants, variables, function symbols and predicate symbols, respectively, and $L$ be the predicate language built with these vocabularies.
\begin{example}\label{sec3:ex1}
Consider the following vocabularies of a predicate language.
\begin{itemize}
\item Constants: ${\cal C}=\{alice, bob\}$
\item Variables: ${\cal V}=\{x, y\}$
\item Function symbols: ${\cal F}=\{mentor\}$
\item Predicate symbols: ${\cal P}=\{Blames\}$
\end{itemize}
The following is a predicate formula meaning that Alice's mentor blames everyone who blames someone.
\begin{eqnarray*}
\forall x(\exists y(Blames(x,y))\to Blames(mentor(alice),x))
\end{eqnarray*}
\end{example}
In this paper, we assume that the predicate language includes only formulas without free variables, i.e., closed formulas.\footnote{The truth values of closed formulas depend only on a model, whereas the truth values of open formulas, i.e., formulas with free variables, depend additionally on an assignment, a function mapping each variable to an entity in the domain of discourse.} We exclude open formulas for the following reasons. First, it is inappropriate to view an assignment in predicate logic as an abstraction, or selective ignorance, of data or observations. Its inclusion thus does not fit the underlying idea of the inference of abstraction. Second, a lot of cases such as Examples \ref{sec1:ex1} and \ref{sec1:ex2} do not need open formulas. Its inclusion thus makes our formalism unnecessary complicated.
\par
As usual, a model in predicate logic is a pair of a domain of discourse and valuation function. The domain of discourse, denoted by $u$, is a non-empty set of a finite or countably infinite number of entities. The valuation function, denoted by $v$, is a function that associates constants, function symbols and predicate symbols with $u$. We use the symbol $ar(x)$ to denote the arity of the function or predicate symbol $x$. Specifically,
\begin{itemize}
\item $v$ maps each constant $c\in{\cal C}$ to an entity of $u$, i.e., $v(c)\in u$.\footnote{\label{sec3:footnote1}We assume that $v$ is surjective with respect to ${\cal C}$, meaning that there is constant $c\in{\cal C}$ such that $v(c)=e$, for all entities $e\in u$. This assumption allows us to apply simple semantics of predicate logic.}
\item $v$ maps each function symbol $f\in{\cal F}$ to an $ar(f)$-ary function from $u$ to $u^{ar(f)}$, i.e., $v(f):u^{ar(f)}\to u$.
\item $v$ maps each predicate symbol $P\in{\cal P}$ to a subset of $u^{ar(P)}$, i.e., $v(P)\subseteq u^{ar(P)}$.
\end{itemize}
From the viewpoint of the inference of abstraction, it is important to adopt the perspective that each model represents a different state of the world. For any function symbol $f\in{\cal F}$, we write $v(f)(v(t_{1}),...,v(t_{ar(f)}))$ as $v(f(t_{1},...,t_{ar(f)}))$, where $t_{1}, t_{2},... t_{ar(f)}$ are the arguments of $f$ called terms referring to constants, variables or functions.
\par
Let $\{m_{1}, m_{2},...,m_{N}\}$ be the set of models of the predicate language $L$. This set is finite or countably infinite. Let $M$ be a random variable for the models taking values from $\{m_{1}, m_{2},...,m_{N}\}$. We assume that each data point supports a single model. We thus assume a function $m: \{d_{1},d_{2},...,d_{K}\}\to\{m_{1}, m_{2},...,m_{N}\}$ such that $m(d_{k})$ denotes the model supported by data $d_{k}$.
\begin{definition}\label{sec3:def02}
Let $d_{k}\in\{d_{1},d_{2},...,d_{K}\}$ and $m_{n}\in\{m_{1}, m_{2},...,m_{N}\}$. The probability of $m_{n}$ given $d_{k}$, denoted by $p(M=m_{n}|D=d_{k})$, is defined as follows.
\begin{eqnarray}
p(M=m_{n}|D=d_{k})=
\begin{cases}
1 & \text{if } m_{n}=m(d_{k})\\
0 & \text{otherwise}
\end{cases}\label{sec3:eq2}
\end{eqnarray}
\end{definition}
We use the symbols $u(m_{n})$ and $v(m_{n})$ to denote the domain of discourse and the valuation function of the model $m_{n}$, i.e., $m_{n}=\langle u(m_{n}), v(m_{n})\rangle$. Thus, the model supported by data $d_{k}$ can be written as $m(d_{k})=\langle u(m(d_{k})), v(m(d_{k}))\rangle$.
\begin{example}\label{sec3:ex3}
Consider the predicate language $L$ built with the following vocabularies.
\begin{itemize}
\item Constants: ${\cal C}=\{alice, bob\}$
\item Variables: ${\cal V}=\{x, y\}$
\item Function symbols: ${\cal F}=\emptyset$
\item Predicate symbols: ${\cal P}=\{Blames\}$
\end{itemize}
The top layer of the hierarchy shown on the right in Figure \ref{sec3:fig1} shows twenty data. Given $u=\{\odot,\oplus\}$, the middle layer shows all the thirty two models $\langle u,v\rangle$ of the language $L$. The depth of the middle layer shows how the valuation function $v$ associates the constants $alice$ and $bob$ with $u$. Its width shows how $v$ associates the predicate symbol $Blames$ to $u$, where an arrow from $x$ to $y$ represents that $x$ blames $y$. Each blank cell in the middle layer is not a model due to the assumption we made in Footnote \ref{sec3:footnote1}. The arrow from the top to middle layers represent a function $m$. The twenty data $d_{k}$ commonly say that there are two people $\odot$ and $\oplus$ named Alice and Bob, respectively , i.e., $u(m(d_{k}))=\{\odot,\oplus\}$, $v(m(d_{k}))(alice)=\odot$ and $v(m(d_{k}))(bob)=\oplus$. Each data point, however, supports a different situation. Specifically, the functions $m$ and $v$ are given as follows, for all data $d_{k}$.
\begin{eqnarray*}
&m(d_{k})=
\begin{cases}
m_{1}&\text{if }k\in\{1\text{-}7\}\\
m_{4}&\text{if }k\in\{8\text{-}11\}\\
m_{7}&\text{if }k\in\{12\text{-}17\}\\
m_{11}&\text{if }k\in\{18\}\\
m_{13}&\text{if }k\in\{19,20\}
\end{cases}
&v(m(d_{k}))(Blames)=
\begin{cases}
\emptyset&\text{if }k\in\{1\text{-}7\}\\
\{(\odot,\oplus),(\oplus,\odot)\}&\text{if }k\in\{8\text{-}11\}\\
\{(\odot,\odot),(\odot,\oplus)\}&\text{if }k\in\{12\text{-}17\}\\
\{(\odot,\oplus),(\oplus,\oplus)\}&\text{if }k\in\{18\}\\
\{(\odot,\odot),(\oplus,\oplus)\}&\text{if }k\in\{19,20\}
\end{cases}
\end{eqnarray*}
\end{example}
%%%%%%%%%%%%%%%%%%%%%%%%%%%%%%%%%%%%%%%%%%%%%%%%%%%%%%%%%%%%%%%%%%%%
%%%%%%%%%%%%%%%%%%%%%%%%%%%%%%%%%%%%%%%%%%%%%%%%%%%%%%%%%%%%%%%%%%%%
%%%%%%%%%%%%%%%%%%%%%%%%%%%%%%%%%%%%%%%%%%%%%%%%%%%%%%%%%%%%%%%%%%%%
\subsection{Models support formulas}\label{sec22}
We are interested in the probability of predicate formula $\alpha\in L$ being true or false. We thus assume that each formula is a random variable taking values from $\{0,1\}$. For any truth values $v\in\{0,1\}$, we use the symbol $\ms{\alpha=v}$ to denote the set of models where $\alpha$ has the truth value $v$. We often write $\ms{\alpha=v}_{m_{n}}=1$ if $m_{n}\in\ms{\alpha=v}$ and $\ms{\alpha=v}_{m_{n}}=0$ otherwise for the membership of the model. We call formulas with neither logical connectives, such as $\lnot$, $\lor$, $\land$ and $\to$, nor quantifiers, such as $\forall$ and $\exists$, atomic formulas. Let $m_{n}=\langle u,v\rangle$ be a model. As usual, the truth value of an atomic formula without variables is defined as follows.
\begin{align*}
&\ms{P(t_{1},...,t_{ar(P)})}_{m_{n}}=
\begin{cases}
1&\text{if } (v(t_{1}),...,v(t_{ar(P)}))\in v(P)\\
0&\text{otherwise}
\end{cases}
\end{align*}
Let $\alpha, \beta\in L$ be formulas and $m_{n}=\langle u,v\rangle$ be a model. As usual, the truth values of compound formulas with logical connectives are defined as follows.
\begin{eqnarray*}
&&\ms{\lnot\alpha}_{m_{n}}=1 \Leftrightarrow \ms{\alpha}_{m_{n}}=0\\
&&\ms{\alpha\land\beta}_{m_{n}}=1 \Leftrightarrow \ms{\alpha}_{m_{n}}=1 \text{ and }\ms{\beta}_{m_{n}}=1 \Leftrightarrow \min\{\ms{\alpha}_{m_{n}},\ms{\beta}_{m_{n}}\}\\
&&\ms{\alpha\lor\beta}_{m_{n}}=1 \Leftrightarrow \ms{\alpha}_{m_{n}}=1 \text{ or }\ms{\beta}_{m_{n}}=1 \Leftrightarrow \max\{\ms{\alpha}_{m_{n}},\ms{\beta}_{m_{n}}\}\\
&&\ms{\alpha\to\beta}_{m_{n}}=1 \Leftrightarrow \ms{\alpha}_{m_{n}}=0 \text{ or }\ms{\beta}_{m_{n}}=1 \Leftrightarrow \max\{1-\ms{\alpha}_{m_{n}},\ms{\beta}_{m_{n}}\}
\end{eqnarray*}
Let us use the symbol $\alpha[c/x]$ to denote the formula replacing all the free variables $x\in{\cal V}$ in the formula $\alpha$ by the constant $c\in{\cal C}$. Here, a variable $x$ is free if there is no quantifier bounding $x$ or $x$ is outside the scope of such quantifiers. As usual, the truth values of compound formulas with quantifiers are defined as follows.\footnote{This definition is based on the assumption we made in Footnote \ref{sec3:footnote1}. With this assumption, we can define the truth solely in terms of constants, without referring to the domain of discourse.}
\begin{eqnarray*}
&&\ms{\forall x~\alpha}_{m_{n}}=1 \Leftrightarrow \ms{\alpha[c/x]}_{m_{n}}=1\text{, for all }c\in{\cal C} \Leftrightarrow \min_{c\in{\cal C}}\{\ms{\alpha[c/x]}_{m_{n}}\}\\
&&\ms{\exists x~\alpha}_{m_{n}}=1 \Leftrightarrow \ms{\alpha[c/x]}_{m_{n}}=1\text{, for some }c\in{\cal C} \Leftrightarrow \max_{c\in{\cal C}}\{\ms{\alpha[c/x]}_{m_{n}}\}
\end{eqnarray*}
We say that a formula $\alpha\in L$ is true in a model $m_{n}$ or $m_{n}$ satisfies, or supports, $\alpha$ if $\ms{\alpha}_{m_{n}}=1$. We also write $\ms{\alpha}_{m_{n}}=1$ as $m_{n}\in\ms{\alpha}$.
\begin{example}[Continued]\label{sec3:ex5}
Let us find the models where everyone blames someone, i.e., $\forall x\exists y$ $Blames(x,y)$, is true. Each atomic formula has the following truth value (see Figure \ref{sec3:fig1}).
\begin{eqnarray*}
&&\ms{Blames(alice,alice)}=\{m_{n}|n\in\{5\text{-}8,13\text{-}16,25\text{-}32\}\}\\
&&\ms{Blames(alice,bob)}=\{m_{n}|n\in\{3,4,7,8,11,12,15,16,18,20,22,24,26,28,30,32\}\}\\
&&\ms{Blames(bob,alice)}=\{m_{n}|n\in\{2,4,6,8,10,12,14,16,19,20,23,24,27,28,31,32\}\}\\
&&\ms{Blames(bob,bob)}=\{m_{n}|n\in\{9\text{-}16,21\text{-}24,29\text{-}32\}\}
\end{eqnarray*}
The compound formula, $\forall x\exists y$ $Blames(x,y)$, thus has the following truth value.
\begin{align*}
&\ms{\forall x\exists y~Blames(x,y)}_{m_{n}}=\min_{c_{1}\in{\cal C}}\big\{\ms{\exists y~Blames(c_{1},y)}_{m_{n}}\big\}\\
&=\min_{c_{1}\in{\cal C}}\big\{\max_{c_{2}\in{\cal C}}\big\{\ms{Blames(c_{1},c_{2})}_{m_{n}}\big\}\big\}\\
&=\min\left\{\begin{array}{l}
\max\big\{\ms{Blames(alice,alice)}_{m_{n}}, \ms{Blames(alice,bob)}_{m_{n}}\big\},\\
\max\big\{\ms{Blames(bob,alice)}_{m_{n}}, \ms{Blames(bob,bob)}_{m_{n}}\big\}
\end{array}\right\}\\
&=
\begin{cases}
1&\text{if }n\in\{3\text{-}8,11\text{-}16,18,20,22,24\text{-}32\}\cap\{2,4,6,8\text{-}16,19\text{-}24,27\text{-}32\}\text{, i.e.,}\\
&\text{if }n\in\{4,6,8,11\text{-}16,20,22,24,27\text{-}32\}\\
0&\text{otherwise}
\end{cases}
\end{align*}
\end{example}
The probability of the truth of a formula is defined using the semantics of predicate logic.
\begin{definition}\label{sec3:def03}
Let $\mu\in[0.5,1]$, $d_{k}\in\{d_{1},d_{2},...,d_{K}\}$, $m_{n}\in\{m_{1},m_{2},...,m_{N}\}$, $\alpha_{1},\alpha_{2},...,\alpha_{I}\in L$ and $v_{1},v_{2},...,v_{I}\in\{0,1\}$. The probability of $\alpha_{1}=v_{1}$ given $\alpha_{2}=v_{2},...,\alpha_{I}=v_{I}$, $M=m_{n}$ and  $D=d_{k}$, denoted by $p(\alpha_{1}=v_{1}|\alpha_{2}=v_{2},...,\alpha_{I}=v_{I}, M=m_{n}, D=d_{k})$, is defined as follows. 
\begin{eqnarray*}
&&p(\alpha_{1}=v_{1}|\alpha_{2}=v_{2},...,\alpha_{I}=v_{I}, M=m_{n}, D=d_{k})=
\begin{cases}
\mu & \text{if }m_{n}\in\ms{\alpha_{1}=v_{1}}\\
1-\mu & \text{otherwise}
\end{cases}
\end{eqnarray*}
\end{definition}
If we adopt the convention that $0^0=1$ then Definition \ref{sec3:def03} can be expressed as a Bernoulli distribution with parameter $\mu$.
\begin{align*}
p(\alpha_{1}=v_{1}|\alpha_{2}=v_{2},...,\alpha_{I}=v_{I}, M=m_{n}, D=d_{k})=\mu^{\ms{\alpha_{1}=v_{1}}_{m_{n}}}(1-\mu)^{1-\ms{\alpha_{1}=v_{1}}_{m_{n}}}
\end{align*}
In Figure \ref{sec3:fig1}, the arrows from the middle to the bottom layer of the hierarchy shown on the right indicate that the predicate formula is true in these models. The following probabilistic property of conditional independence comes directly from the property of predicate logic.
\begin{proposition}\label{sec2:prop1}
Let $\alpha_{1},\alpha_{2}\in L$. $\alpha_{1}$ is conditionally independent of $\alpha_{2}$ and $D$ given $M$, i.e., $p(\alpha_{1}|\alpha_{2},M,D)=p(\alpha_{1}|M)$.
\end{proposition}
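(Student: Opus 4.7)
The plan is a direct computation from Definition \ref{sec3:def03}, relying on the crucial observation that the Bernoulli expression given there depends on its conditioning variables only through $M=m_n$ and the truth value $\ms{\alpha_1=v_1}_{m_n}$; the other formulas $\alpha_2,\dots,\alpha_I$ and the data point $D$ do not appear on the right-hand side of the definition at all. So the proof amounts to unfolding each side of the claimed equality to the same Bernoulli form.

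First, I would apply Definition \ref{sec3:def03} with $I=2$ to the left-hand conditional $p(\alpha_1=v_1\mid \alpha_2=v_2, M=m_n, D=d_k)$, obtaining $\mu^{\ms{\alpha_1=v_1}_{m_n}}(1-\mu)^{1-\ms{\alpha_1=v_1}_{m_n}}$, a quantity that manifestly depends only on $m_n$ and $v_1$. Next I would compute the right-hand marginal $p(\alpha_1=v_1\mid M=m_n)$ by the standard sum rule applied to the joint $p(D,M,\alpha)$ from the introduction:
\begin{equation*}
p(\alpha_1=v_1\mid M=m_n) = \sum_{k=1}^{K} p(\alpha_1=v_1\mid M=m_n, D=d_k)\, p(D=d_k\mid M=m_n).
\end{equation*}
By Definition \ref{sec3:def03} applied with $I=1$, each first factor again equals $\mu^{\ms{\alpha_1=v_1}_{m_n}}(1-\mu)^{1-\ms{\alpha_1=v_1}_{m_n}}$ independently of $k$, so it factors out of the sum, and the remaining $p(D=d_k\mid M=m_n)$ summing to $1$ leaves the same expression. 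Both sides therefore coincide, summing over all $v_1\in\{0,1\}$ if one wishes the claim at the level of distributions rather than entries.

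I do not expect any genuine obstacle here: the proposition is essentially a structural restatement of the fact that Definition \ref{sec3:def03} was set up so that conditioning on $M$ screens off both $D$ and any other formulas. The only point worth being careful about is justifying the marginalisation step by invoking the joint factorisation $p(D,M,\alpha)=p(\alpha\mid M)p(M\mid D)p(D)$ declared in the introduction, which guarantees the marginals and conditionals used above are well-defined and consistent with Definitions \ref{sec3:def01}--\ref{sec3:def03}.
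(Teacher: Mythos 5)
Your proposal is correct and rests on exactly the same key observation as the paper's proof, namely that the conditional in Definition~\ref{sec3:def03} depends on neither $\alpha_{2}$ nor $D$, so that marginalising these out (you condition on $M$ and sum over $D$; the paper expands $p(\alpha_{1},M)/p(M)$ over $v_{2}$ and $d_{k}$) leaves the same Bernoulli expression on both sides. The difference is purely one of packaging, so there is nothing substantive to add.
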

\begin{proof}
Using the definition of conditional probability, the right-hand side can be written as 
\begin{align*}
&p(\alpha_{1}|M)=\frac{p(\alpha_{1},M)}{p(M)}.
\end{align*}
Using the sum rule \cite{Bishop:06} and the product rule \cite{Bishop:06} of probability theory, its numerator can be expanded as
\begin{align*}
p(\alpha_{1},M)&=\sum_{v_{2}}\sum_{d_{k}}p(\alpha_{1},\alpha_{2}=v_{2},M,D=d_{k})\\
&=\sum_{v_{2}}\sum_{d_{k}}p(\alpha_{1}|\alpha_{2}=v_{2},M,D=d_{k})p(\alpha_{2}=v_{2},M,D=d_{k}).
\end{align*}
Now, it is obvious from Definition \ref{sec3:def03} that neither $\alpha_{2}$ nor $D$ affects the value of $p(\alpha_{1}|\alpha_{2}=v_{2},M,D=d_{k})$. We can thus move it outward. Using the sum rule, we have
\begin{align*}
p(\alpha_{1},M)&=p(\alpha_{1}|\alpha_{2},M,D)\sum_{v_{2}}\sum_{d_{k}}p(\alpha_{2}=v_{2},M,D=d_{k})=p(\alpha_{1}|\alpha_{2},M,D)p(M).
\end{align*}
Taking into accout the denominator, the original expression can be written as
\begin{align*}
&p(\alpha_{1}|M)=\frac{p(\alpha_{1}|\alpha_{2},M,D)p(M)}{p(M)}=p(\alpha_{1}|\alpha_{2},M,D).
\end{align*}
\end{proof}
From the equation in Proposition \ref{sec2:prop1}, we can simplify Definition \ref{sec3:def03} as follows.
\begin{eqnarray}
p(\alpha_{1}=v_{1}|M=m_{n})=\mu^{\ms{\alpha_{1}=v_{1}}}(1-\mu)^{1-\ms{\alpha_{1}=v_{1}}}\label{sec3:eq3}
\end{eqnarray}
The following example shows why we need to assume $\mu\in[0.5,1]$, rather than simply $\mu=1$.
\begin{example}[Continued.]\label{sec2:ex2}
Let $\mu=1$ and $\alpha=Blames(alice,bob)\land\lnot Blames(alice,bob)$. Using the definition of conditional probability, the sum rule, the product rule and Proposition \ref{sec2:prop1}, we have
\begin{align*}
p(M=m_{4}|\alpha)&=\frac{p(M=m_{4},\alpha)}{p(\alpha)}=\frac{p(M=m_{4},\alpha)}{\sum_{n=1}^{32}p(M=m_{n},\alpha)}\\
&=\frac{p(\alpha|M=m_{4})p(M=m_{4})}{\sum_{n=1}^{32}p(\alpha|M=m_{n})p(M=m_{n})}=\frac{(1-\mu)p(M=m_{4})}{\sum_{n=1}^{32}(1-\mu)p(M=m_{n})}=\frac{0}{0}.
\end{align*}
Namely, the value is undefined due to division by zero. However, given $\mu\neq 1$ such as $\mu$ approaches one, denoted by $\mu\to 1$, the undefined value can be replaced by a reasonable one.
\begin{align*}
&p(M=m_{4}|\alpha)=\lim_{\mu\to 1}\frac{(1-\mu)p(M=m_{4})}{\sum_{n=1}^{32}(1-\mu)p(M=m_{n})}=\frac{p(M=m_{4})}{\sum_{n=1}^{32}p(M=m_{n})}=p(M=m_{4})
\end{align*}
Here, $\sum_{n=1}^{32}p(M=m_{n})=1$ as $p(M)$ is the probability distribution over all the models.
\end{example}
In Section \ref{sec3}, we will discuss that $\mu=1$ corresponds to the logical consequence relation and $\mu\to 1$ corresponds to its natural generalisations. In Figure \ref{sec3:fig1}, each arrow between the middle and bottom layers of the hierarchy shown on the right shows that the model satisfies or supports the formula, $\forall x(\exists y(Blames(x,y))\to Blames(alice,x))$.
%%%%%%%%%%%%%%%%%%%%%%%%%%%%%%%%%%%%%%%%%%%%%%%%%%%%%%%%%%%%%%%%%%%%%%%%%%%%%%%%%%%%%%%%%%%%%%%%%%%%%%%%%%%%%%%%%%%%%%%%%%%%%%%%%%%%%%%%%%%%%%%%%%%%%%%%%%%%%%%%%%%%%%%%%%
\subsection{Predicate reasoning}\label{sec23}
We can now discuss probabilistic reasoning with predicate language. The following property is useful to simply our notation.
\begin{proposition}\label{sec2:prop2}
Let $\alpha\in L$. $p(\alpha=1)=p(\lnot\alpha=0)$.
\end{proposition}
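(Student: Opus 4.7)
The plan is to reduce the statement to the semantic observation that the set of models where $\lnot\alpha$ is false coincides with the set of models where $\alpha$ is true, and then to propagate this equality through the joint distribution via marginalisation.

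First I would invoke the semantics of negation given just before Definition \ref{sec3:def03}, namely $\ms{\lnot\alpha}_{m_{n}}=1 \Leftrightarrow \ms{\alpha}_{m_{n}}=0$. Rearranging, $\ms{\lnot\alpha}_{m_{n}}=0 \Leftrightarrow \ms{\alpha}_{m_{n}}=1$, and therefore the indicators satisfy $\ms{\lnot\alpha=0}_{m_{n}} = \ms{\alpha=1}_{m_{n}}$ for every model $m_{n}$. Plugging these indicators into the Bernoulli form of Definition \ref{sec3:def03}, displayed in equation (\ref{sec3:eq3}), immediately gives
\begin{align*}
p(\lnot\alpha=0\mid M=m_{n}) = \mu^{\ms{\lnot\alpha=0}_{m_{n}}}(1-\mu)^{1-\ms{\lnot\alpha=0}_{m_{n}}} = \mu^{\ms{\alpha=1}_{m_{n}}}(1-\mu)^{1-\ms{\alpha=1}_{m_{n}}} = p(\alpha=1\mid M=m_{n}).
\end{align*}

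Next I would lift this pointwise identity to the marginals. Using the sum rule and the product rule of probability, together with Proposition \ref{sec2:prop1} to drop irrelevant conditioning, I can write
\begin{align*}
p(\alpha=1) = \sum_{n}\sum_{k} p(\alpha=1\mid M=m_{n})\, p(M=m_{n}\mid D=d_{k})\, p(D=d_{k}),
\end{align*}
and an identical expansion for $p(\lnot\alpha=0)$. Since the factor depending on $\alpha$ is the same in both expansions by the computation above, and the remaining factors $p(M\mid D)$ and $p(D)$ (given by Definitions \ref{sec3:def01} and \ref{sec3:def02}) do not involve the formula at all, the two marginals coincide.

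There is essentially no obstacle here; the statement is a direct bookkeeping consequence of the semantics of $\lnot$ combined with the Bernoulli form of $p(\alpha\mid M)$. The only thing to be slightly careful about is ensuring the argument does not secretly require $\mu\neq 1$: it does not, because the identity $p(\alpha=1\mid M)=p(\lnot\alpha=0\mid M)$ holds value-by-value for every admissible $\mu\in[0.5,1]$, so no limiting argument of the kind used in Example \ref{sec2:ex2} is needed.
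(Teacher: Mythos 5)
Your proposal is correct and follows essentially the same route as the paper's proof: both rest on the identity $\ms{\alpha=1}=\ms{\lnot\alpha=0}$ from the semantics of negation, substitute it into the Bernoulli form of $p(\cdot\mid M)$, and marginalise over the models, noting that the argument is independent of $\mu$. The only cosmetic difference is that you additionally expand over $D$, which the paper's marginalisation over $M$ already subsumes.
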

\begin{proof}
$\alpha$ is true in a model iff $\lnot\alpha$ is false in the model. Thus, $\ms{\alpha=1}=\ms{\lnot\alpha=0}$. Using the sum rule, the product rule and Proposition \ref{sec2:prop1}, we have
\begin{align*}
&p(\alpha=1)=\sum_{m_{n}}p(\alpha=1|M=m_{n})p(M=m_{n})=\sum_{m_{n}}\mu^{\ms{\alpha=1}_{m_{n}}}(1-\mu)^{1-\ms{\alpha=1}_{m_{n}}}p(M=m_{n})\\
&=\sum_{m_{n}}\mu^{\ms{\lnot\alpha=0}_{m_{n}}}(1-\mu)^{1-\ms{\lnot\alpha=0}_{m_{n}}}p(M=m_{n})\\
&=\sum_{m_{n}}p(\lnot\alpha=0|M=m_{n})p(M=m_{n})=p(\lnot\alpha=0).
\end{align*}
This holds regardless of the value of $\mu$.
\end{proof}
Therefore, we can write $\lnot\alpha=0$ as $\alpha=1$ and abbreviate $\alpha=1$ as $\alpha$, for all $\alpha\in L$. We also abbreviate $M=m_{n}$ and $D=d_{k}$ as $m_{n}$ and $d_{k}$, respectively.
\par
Now, using the sum rule, the product rule and the conditional independence, i.e., Proposition \ref{sec2:prop1}, the probability of $\alpha,\beta\in L$ can be expressed as follows.
\begin{align*}
p(\alpha,\beta)&=\sum_{k}^{K}\sum_{n}^{N}p(\alpha,\beta,m_{n},d_{k})=\sum_{k}^{K}\sum_{n}^{N}p(\alpha|\beta,m_{n},d_{k})p(\beta|m_{n},d_{k})p(m_{n}|d_{k})p(d_{k})\\
&=\sum_{k}^{K}\sum_{n}^{N}p(\alpha|m_{n})p(\beta|m_{n})p(m_{n}|d_{k})p(d_{k})
\end{align*}
Since $p(d_{k})=1/K$, i.e., Definition \ref{sec3:def01}, and our assumption that each data point supports a single model, we finally have
\begin{eqnarray}
p(\alpha,\beta)=\frac{1}{K}\sum_{k}^{K}\sum_{n}^{N}p(\alpha|m_{n})p(\beta|m_{n})p(m_{n}|d_{k})=\frac{1}{K}\sum_{k}^{K}p(\alpha|m(d_{k}))p(\beta|m(d_{k})).\label{sec33:eq1}
\end{eqnarray}
We here used $\sum_{n}^{N}p(\alpha|m_{n})p(\beta|m_{n})p(m_{n}|d_{k})=p(\alpha|m(d_{k}))p(\beta|m(d_{k}))$. This fact is crucially important in terms of decidability and computational complexity since $N$ can be countably infinite.
\begin{example}\label{sec3:ex6}
Let $\alpha$ be $\forall x(\exists y(Blames(x,y))\to Blames(alice,x))$ shown in Figure \ref{sec3:fig1}. The probability of the formula being true can be evaluated using Equation (\ref{sec33:eq1}).
\begin{eqnarray*}
p(\alpha)&=&\frac{1}{20}\sum_{k=1}^{20}p(\alpha|m(d_{k}))=\frac{1}{20}\sum_{k=1}^{20}\mu^{\ms{\alpha}_{m(d_{k})}}(1-\mu)^{1-\ms{\alpha}_{m(d_{k})}}\\
&=&\frac{1}{20}\big\{\sum_{k\in\{1\text{-}17\}}\mu^{1}(1-\mu)^{0}+\sum_{k\in\{18\text{-}20\}}\mu^{0}(1-\mu)^{1}\big\}\\
&=&\frac{1}{20}\big\{\sum_{k\in\{1\text{-}17\}}\mu+\sum_{k\in\{18\text{-}20\}}(1-\mu)\big\}=\frac{17\mu+3(1-\mu)}{20}
\end{eqnarray*}
Therefore, $p(\alpha)=17/20$ when $\mu=1$. This result is intuitive as it is the number of data supporting models where $\alpha$ is true, out of all the twenty data.
\end{example}
%%%%%%%%%%%%%%%%%%%%%%%%%%%%%%%%%%%%%%%%%%%%%%%%%%%%%%%%%%%%%%%%%%%%%%%%%%%%%%%%%%%%%%%%%%%%%%%%%%%%%%%%%%%%%%%%%%%%%%%%%%%%%%%%%%%%%%%%%%%%%%%%%%%%%%%%%%%%%%%
\section{Evaluations}\label{sec3}
\subsection{Reasoning as learning}\label{sec31}
The common view in statistics is that observed data are generated from unobserved parameters of a probability distribution. Maximum likelihood estimation (MLE) is the most commonly used statistical method to estimate the values of unobserved parameters only from observed data. MLE is defined as $\hat{\Theta}=\argmax_{\Theta}p(d_{1},d_{2},...,d_{K}|\Theta)$, where each $d_{k}$ is an observed data point and $\Theta$ is the set of parameters of a probability distribution.
\begin{proposition}
Let $\{d_{1},d_{2},...,d_{K}\}$ be a multiset of $K$ data and $\Theta$ be the parameters of a categorical distribution. $p(M)=\hat{\Theta}$ if and only if $\hat{\Theta}$ maximises the likelihood of data, i.e., $\hat{\Theta}=\argmax_{\Theta}p(d_{1},d_{2},...,d_{K}|\Theta)$.
\end{proposition}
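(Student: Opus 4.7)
The plan is to compute both sides of the asserted equality in closed form and check that they agree coordinate by coordinate. Let $N_n=|\{k : m(d_k)=m_n\}|$ denote the number of data points supporting model $m_n$, so that $\sum_n N_n = K$. I would expect this count to be the central quantity on which both sides depend.

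First I would derive $p(M=m_n)=N_n/K$ directly from the joint specified in Section \ref{sec21}. Using the sum rule together with Definitions \ref{sec3:def01} and \ref{sec3:def02},
\begin{align*}
p(M=m_n) = \sum_{k=1}^{K} p(M=m_n \mid D=d_k)\,p(D=d_k),
\end{align*}
and each summand contributes $1/K$ exactly when $m(d_k)=m_n$ and $0$ otherwise, so the marginal collapses to $N_n/K$.

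Next I would turn to the MLE side. Reading $\Theta=(\theta_1,\ldots,\theta_N)$ as the parameters of a categorical distribution over the $N$ models (each data point being observed as supporting a unique model via the fixed function $m$) and assuming the data are i.i.d.\ samples, the likelihood factorises as $p(d_1,\ldots,d_K\mid\Theta)=\prod_{k=1}^K \theta_{m(d_k)}=\prod_{n=1}^N \theta_n^{N_n}$. Maximising the resulting log-likelihood $\sum_n N_n\log\theta_n$ subject to $\sum_n\theta_n=1$ is a textbook Lagrange-multiplier calculation yielding the unique maximiser $\hat{\theta}_n = N_n/K$. Comparing with the first step gives $p(M=m_n)=\hat{\theta}_n$ for every $n$, and the biconditional follows from the uniqueness of this maximiser.

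The main obstacle here is interpretive rather than computational: the proposition does not spell out what $\Theta$ parameterises. The argument hinges on reading $\Theta$ as a categorical distribution over the $N$ models rather than one directly over the $K$ data (under which reading MLE would trivially recover the uniform $p(D)$ and the claim would become false). Once this reading is fixed and the deterministic map $m$ is used to transport each observation into the outcome space of $\Theta$, the remainder is only a standard categorical MLE together with a single use of the sum rule.
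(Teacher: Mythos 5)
Your proposal is correct and follows essentially the same route as the paper: compute the marginal $p(M=m_n)$ via the sum rule to get the relative frequency $N_n/K$, and identify this with the categorical MLE. The only differences are cosmetic — you derive the MLE by Lagrange multipliers where the paper simply cites it as the known relative-frequency estimate, and you make explicit (correctly) that $\Theta$ must parameterise the distribution over models rather than over data, a point the paper leaves implicit.
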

\begin{proof}
Let $K$ be the total number of data, and $K_{n}$ be the number of data in the $n$th category or model. The maximum likelihood estimate of the parameter $\theta_{n}$ for a categorical distribution is simply known as the relative frequency of data, i.e.,
\begin{eqnarray*}
\theta_{n}=\frac{\text{The number of data in the }n\text{th category}}{\text{The total number of data}}=\frac{K_{n}}{K}.
\end{eqnarray*}
Therefore, the maximum likelihood estimate is given by
\begin{eqnarray*}
\hat{\Theta}=(\frac{K_{1}}{K},\frac{K_{2}}{K},....,\frac{K_{N}}{K}).
\end{eqnarray*}
Let $m_{n}$ be a model of predicate logic. Using the sum and product rules, we have
\begin{align*}
p(m_{n})=\sum_{k}p(m_{n},d_{k})=\sum_{k}p(m_{n}|d_{k})p(d_{k})=\frac{1}{K}\sum_{k}p(m_{n}|d_{k})=\frac{K_{n}}{K}.
\end{align*}
Therefore, we have $p(M)=\hat{\Theta}$.
\end{proof}
\begin{example}\label{sec4:ex1}
Consider the twenty data and thirty two models shown on the top layers of the both hierarchies in Figure \ref{sec3:fig1}. Let $K$ be the total number of data, and $K_{n}$ be the number of data in the $n$th model. We then have $p(m_{1})=\frac{7}{20}$, $p(m_{4})=\frac{4}{20}$, $p(m_{7})=\frac{6}{20}$, $p(m_{11})=\frac{1}{20}$,  $p(m_{13})=\frac{2}{20}$ and $p(m_{n})=0$, for all the remaining models $m_{n}$.
\end{example}
%%%%%%%%%%%%%%%%%%%%%%%%%%%%%%%%%%%%%%%%%%%%%%%%%%%%%%%%%%%%%%%%%%%%%%%%%%%%%%%%%%%%%%%%%%%%%%%%%%%%
\subsection{Reasoning from possible information}\label{sec32}
This section aims to logically characterise the inference of predicative abstraction with $\mu=1$. We focus on the relation between models and formulas by marginalising out data, i.e., $p(\alpha,M)=\sum_{k}p(\alpha,M,D=d_{k})$. As usual, we use the symbol $\ms{\Delta}$ to denote the set of models where all the formulas in $\Delta\subseteq L$ are true, i.e., $\ms{\Delta}=\bigcap_{\alpha\in\Delta}\ms{\alpha}$. A model with a non-zero probability is called possible. We use the symbol $\pms{\Delta}$ to denote the set of possible models where all the formulas in $\Delta\subseteq L$ are true, i.e., $\pms{\Delta}=\{m_{n}\in\ms{\Delta}|p(m_{n})\neq 0\}$. We write $m_{n}\in\ms{\Delta}$ and $m_{n}\in\pms{\Delta}$ as $\ms{\Delta}_{m_{n}}=1$ and $\pms{\Delta}_{m_{n}}=1$, respectively. Note that $\ms{\emptyset}$ and $\pms{\emptyset}$ are the sets of all models and all possible models, respectively. We use the empirical consequence relation originally defined for propositional logic.
\begin{definition}[\cite{kido:24-1}]\label{sec2:def4}
Let $\alpha\in L$ and $\Delta\subseteq L$. $\alpha$ is an empirical consequence of $\Delta$, denoted by $\Delta\ent\alpha$, if $\pms{\Delta}\subseteq\pms{\alpha}$.
\end{definition}
As usual, $\alpha$ is a logical consequence of $\Delta$, denoted by $\Delta\cent\alpha$, if $\ms{\Delta}\subseteq\ms{\alpha}$. The empirical consequence relation $\ent$ is thus a probabilistic generalisation of the logical consequence relation $\cent$.
\par
We write $p_{\mu=1}(\alpha)$ and $p_{\mu\to1}(\alpha)$ when we want to specify the value of $\mu$ used in the evaluation. We often omit it when the value of $\mu$ is obvious from the context. We can now logically characterise the inference of predicative abstraction with $\mu=1$.
\begin{theorem}\label{sec42:the1}
Let $\alpha\in L$ and $\Delta\subseteq L$ such that $\pms{\Delta}\neq\emptyset$.
\begin{eqnarray*}
p_{\mu=1}(\alpha|\Delta)=\frac{\sum_{n}\ms{\alpha}_{m_{n}}\pms{\Delta}_{m_{n}}p(m_{n})}{\sum_{n}\pms{\Delta}_{m_{n}}p(m_{n})}
\end{eqnarray*}
\end{theorem}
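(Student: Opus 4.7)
The plan is to start from the definition of conditional probability, $p_{\mu=1}(\alpha\mid\Delta)=p(\alpha,\Delta)/p(\Delta)$, and expand both numerator and denominator by marginalising over $M$ and $D$ using the sum and product rules, as already done for two formulas leading to Equation (\ref{sec33:eq1}). Treating $\Delta$ as the joint event $\bigwedge_{\beta\in\Delta}\beta=1$, I would first note that an easy iteration of Proposition \ref{sec2:prop1} gives conditional independence of all members of $\Delta\cup\{\alpha\}$ given $M$, so that $p(\alpha,\Delta\mid m_n,d_k)=p(\alpha\mid m_n)\prod_{\beta\in\Delta}p(\beta\mid m_n)$.

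Next I would plug in Equation (\ref{sec3:eq3}) for each factor and set $\mu=1$ under the convention $0^0=1$, which collapses $p(\alpha\mid m_n)$ to the indicator $\ms{\alpha}_{m_n}$ and $\prod_{\beta\in\Delta}p(\beta\mid m_n)$ to $\prod_{\beta\in\Delta}\ms{\beta}_{m_n}=\ms{\Delta}_{m_n}$, since $\ms{\Delta}=\bigcap_{\beta\in\Delta}\ms{\beta}$. After carrying out the $d_k$-sum as in the derivation of (\ref{sec33:eq1}) and using Definition \ref{sec3:def02} to turn $\sum_k p(m_n\mid d_k)p(d_k)$ into $p(m_n)$, both the numerator and denominator take the clean model-sum form
\begin{align*}
p_{\mu=1}(\alpha,\Delta)=\sum_{n}\ms{\alpha}_{m_{n}}\ms{\Delta}_{m_{n}}p(m_{n}),\qquad p_{\mu=1}(\Delta)=\sum_{n}\ms{\Delta}_{m_{n}}p(m_{n}).
\end{align*}

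The final step is to upgrade $\ms{\Delta}$ to $\pms{\Delta}$ in both sums. The observation is that, by the very definition $\pms{\Delta}=\{m_n\in\ms{\Delta}\mid p(m_n)\neq 0\}$, the products $\ms{\Delta}_{m_n}p(m_n)$ and $\pms{\Delta}_{m_n}p(m_n)$ agree termwise: they are equal to $p(m_n)$ when $m_n\in\ms{\Delta}$ has positive probability, and are $0$ otherwise. Substituting into both sums yields the stated formula, and the hypothesis $\pms{\Delta}\neq\emptyset$ guarantees that the denominator $\sum_n\pms{\Delta}_{m_n}p(m_n)$ is strictly positive, so the quotient is well defined (avoiding the $0/0$ pathology flagged in Example \ref{sec2:ex2}).

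The main obstacle I anticipate is less algebraic than notational: making the generalisation of Proposition \ref{sec2:prop1} from a single $\alpha_2$ to an arbitrary set $\Delta$ watertight, and handling the $0^0=1$ convention carefully so that the indicator rewriting $p(\beta\mid m_n)=\ms{\beta}_{m_n}$ remains valid for every $\beta\in\Delta$ simultaneously. Once these two bookkeeping points are in place, the replacement $\ms{\Delta}\leftrightarrow\pms{\Delta}$ is immediate and completes the proof.
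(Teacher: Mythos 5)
Your proposal is correct and follows essentially the same route as the paper's proof: expand $p(\alpha,\Delta)/p(\Delta)$ via the sum/product rules and conditional independence given $M$, collapse $p(\alpha\mid m_n)$ and $p(\Delta\mid m_n)$ to indicators at $\mu=1$, and then pass from $\ms{\Delta}$ to $\pms{\Delta}$ because the two differ only on zero-probability models. Your termwise identity $\ms{\Delta}_{m_n}p(m_n)=\pms{\Delta}_{m_n}p(m_n)$ is just a tidier phrasing of the paper's split of the sum into $\pms{\Delta}$ and its complement, so no substantive difference remains.
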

\begin{proof}
Using the definition of conditional probability and the conditional independence we showed in the previous section, we have 
\begin{eqnarray*}
&&p(\alpha|\Delta)=\frac{p(\alpha,\Delta)}{p(\Delta)}=\frac{\sum_{n}p(\alpha,\Delta,m_{n})}{\sum_{n}p(\Delta,m_{n})}=\frac{\sum_{n}p(\alpha|m_{n})p(\Delta|m_{n})p(m_{n})}{\sum_{n}p(\Delta|m_{n})p(m_{n})}.
\end{eqnarray*}
Dividing models into possible ones, i.e., $\pms{\Delta}$, and the others, we have
\begin{eqnarray*}
&&p(\alpha|\Delta)=\frac{\textstyle{\sum_{m_{n}\in\pms{\Delta}}p(\alpha|m_{n})p(\Delta|m_{n})p(m_{n})+\sum_{m_{n}\notin\pms{\Delta}}p(\alpha|m_{n})p(\Delta|m_{n})p(m_{n})}}{\textstyle{\sum_{m_{n}\in\pms{\Delta}}p(\Delta|m_{n})p(m_{n})+\sum_{m_{n}\notin\pms{\Delta}}p(\Delta|m_{n})p(m_{n})}}.
\end{eqnarray*}
Since $\mu=1$, $p(\Delta|m_{n})$ can be expanded as follows.
\begin{eqnarray*}
&&p(\Delta|m_{n})=\prod_{\beta\in\Delta}p(\beta|m_{n})=\prod_{\beta\in\Delta}1^{\ms{\beta}_{m_{n}}}0^{1-\ms{\beta}_{m_{n}}}
=
\begin{cases}
1&\text{if } m_{n}\in\ms{\Delta}\\
0&\text{otherwise}
\end{cases}
\end{eqnarray*}
Thus, $p(\Delta|m_{n})=0$, for all $m_{n}\notin\ms{\Delta}$. Moreover, $p(m_{n})=0$, for all $m_{n}\in\ms{\Delta}\setminus\pms{\Delta}$. These two facts imply that $\sum_{m_{n}\notin\pms{\Delta}}p(\Delta|m_{n})p(m_{n})=0$. Since $p(\Delta|m_{n})=1$, for all $m_{n}\in\pms{\Delta}$, we have
\begin{eqnarray*}
&&p(\alpha|\Delta)=\frac{\sum_{m_{n}\in\pms{\Delta}}p(\alpha|m_{n})p(m_{n})}{\sum_{m_{n}\in\pms{\Delta}}p(m_{n})}=\frac{\sum_{n}\pms{\Delta}_{m_{n}}p(\alpha|m_{n})p(m_{n})}{\sum_{n}\pms{\Delta}_{m_{n}}p(m_{n})}.
\end{eqnarray*}
Since $\mu=1$, $p(\alpha|m_{n})$ can be developed as follows.
\begin{eqnarray*}
&&p(\alpha|m_{n})=1^{\ms{\alpha}_{m_{n}}}0^{1-\ms{\alpha}_{m_{n}}}
=
\begin{cases}
1&\text{if } m_{n}\in\ms{\alpha}\\
0&\text{otherwise}
\end{cases}
\end{eqnarray*}
Therefore, 
\begin{eqnarray*}
&&p(\alpha|\Delta)=\frac{\sum_{n}\ms{\alpha}_{m_{n}}\pms{\Delta}_{m_{n}}p(m_{n})}{\sum_{n}\pms{\Delta}_{m_{n}}p(m_{n})}.
\end{eqnarray*}
\end{proof}
The assumption of $\pms{\Delta}\neq\emptyset$ guarantees that $p_{\mu=1}(\alpha|\Delta)$ involves no division by zero, which causes an undefined value.
\begin{example}[Continued.]\label{sec4:ex2}
Given that everyone blames someone and that everyone is blamed by someone, what is the probability that Alice blames Bob? Each model determines the truth value of each atomic formula as follows (see Figure \ref{sec3:fig1}). 
\begin{eqnarray*}
&&\pms{Blames(alice,alice)}=\{m_{7},m_{13}\}\\
&&\pms{Blames(alice,bob)}=\{m_{4},m_{7},m_{11}\}\\
&&\pms{Blames(bob,alice)}=\{m_{4}\}\\
&&\pms{Blames(bob,bob)}=\{m_{11},m_{13}\}
\end{eqnarray*}
Thus, each compound formula has the following truth value.
\begin{eqnarray*}
&&\pms{\forall x\exists y~Blames(x,y)}_{m_{n}}=\min_{c_{1}\in{\cal C}}\big\{\max_{c_{2}\in{\cal C}}\big\{\pms{Blames(c_{1},c_{2})}_{m_{n}}\big\}\big\}\\
&&~~~~~=\min\left\{\begin{array}{l}
\max\big\{\pms{Blames(alice,alice)}_{m_{n}}, \pms{Blames(alice,bob)}_{m_{n}}\big\},\\
\max\big\{\pms{Blames(bob,alice)}_{m_{n}}, \pms{Blames(bob,bob)}_{m_{n}}\big\}
\end{array}\right\}\\
&&~~~~~=
\begin{cases}
1&\text{if } n\in\{4,11,13\}\\
0&\text{otherwise }
\end{cases}
\\
&&\pms{\forall y\exists x~Blames(x,y)}_{m_{n}}=\min_{c_{1}\in{\cal C}}\big\{\max_{c_{2}\in{\cal C}}\big\{\pms{Blames(c_{2},c_{1})}_{m_{n}}\big\}\big\}\\
&&~~~~~=\min\left\{\begin{array}{l}
\max\big\{\pms{Blames(alice,alice)}_{m_{n}}, \pms{Blames(bob,alice)}_{m_{n}}\big\},\\
\max\big\{\pms{Blames(alice,bob)}_{m_{n}}, \pms{Blames(bob,bob)}_{m_{n}}\big\}
\end{array}\right\}\\
&&~~~~~=
\begin{cases}
1&\text{if } n\in\{4,7,13\}\\
0&\text{otherwise }
\end{cases}
\end{eqnarray*}
Now, we can apply Theorem \ref{sec42:the1} to find the probability.
\begin{eqnarray*}
&&p(Blames(alice,bob)|\forall x\exists y~Blames(x,y), \forall y\exists x~Blames(x,y))\\
&&=\frac{\sum_{n}\ms{Blames(alice,bob)}_{m_{n}}\pms{\forall x\exists y~Blames(x,y), \forall y\exists x~Blames(x,y)}_{m_{n}}p(m_{n})}{\sum_{n}\pms{\forall x\exists y~Blames(x,y), \forall y\exists x~Blames(x,y)}_{m_{n}}p(m_{n})}\\
&&=\frac{\sum_{n\in\{4\}}p(m_{n})}{\sum_{n\in\{4,13\}}p(m_{n})}=\frac{\sum_{k}p(m_{4}|d_{k})}{\sum_{k}\{p(m_{4}|d_{k})+p(m_{13}|d_{k})\}}=\frac{4}{4+2}=\frac{2}{3}
\end{eqnarray*}
We added the marginalised data in the last line. $p(d_{k})$ is canceled out due to its uniformity.
\end{example}
The following fact shows that Theorem \ref{sec42:the1} is a generalisation of the empirical consequence relation. The proof can be found in Appendix \ref{sec:appendix}.
\begin{corollary}\label{sec42:cor1}
Let $\alpha\in L$ and $\Delta\subseteq L$ such that $\pms{\Delta}\neq\emptyset$. $p_{\mu=1}(\alpha|\Delta)=1$ if and only if $\Delta\ent\alpha$.
\end{corollary}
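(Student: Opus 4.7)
The plan is to use Theorem \ref{sec42:the1} to reduce the claim to a purely set-theoretic comparison between $\pms{\Delta}$ and $\ms{\alpha}$, and then to show that this comparison is equivalent to the empirical consequence relation from Definition \ref{sec2:def4}.

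First I would invoke Theorem \ref{sec42:the1} to rewrite
\begin{eqnarray*}
p_{\mu=1}(\alpha|\Delta)=\frac{\sum_{n}\ms{\alpha}_{m_{n}}\pms{\Delta}_{m_{n}}p(m_{n})}{\sum_{n}\pms{\Delta}_{m_{n}}p(m_{n})}.
\end{eqnarray*}
The assumption $\pms{\Delta}\neq\emptyset$ ensures the denominator is strictly positive, since every $m_{n}\in\pms{\Delta}$ contributes $p(m_{n})>0$ by definition of $\pms{\cdot}$. So the expression is well defined and lies in $[0,1]$, with the numerator bounded above by the denominator termwise (because $\ms{\alpha}_{m_{n}}\in\{0,1\}$).

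Next I would argue both directions. For the forward direction, $p_{\mu=1}(\alpha|\Delta)=1$ forces numerator and denominator to be equal, i.e.\ $\sum_{n}(1-\ms{\alpha}_{m_{n}})\pms{\Delta}_{m_{n}}p(m_{n})=0$. Since every summand is non-negative, this means that for each $n$ with $\pms{\Delta}_{m_{n}}=1$ and $p(m_{n})>0$ we must have $\ms{\alpha}_{m_{n}}=1$. But $m_{n}\in\pms{\Delta}$ already implies $p(m_{n})>0$, so this says $\pms{\Delta}\subseteq\ms{\alpha}$; combined again with $p(m_{n})>0$ for $m_{n}\in\pms{\Delta}$, this is precisely $\pms{\Delta}\subseteq\pms{\alpha}$, i.e.\ $\Delta\ent\alpha$. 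For the backward direction, if $\Delta\ent\alpha$ then every $m_{n}\in\pms{\Delta}$ satisfies $\ms{\alpha}_{m_{n}}=1$, so every non-zero term of the denominator appears identically in the numerator, and the ratio equals one.

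There is no serious obstacle here; the only subtlety is the distinction between $\ms{\cdot}$ and $\pms{\cdot}$. One must be careful that the empirical consequence relation is defined using $\pms{\cdot}$ on both sides, and that $\pms{\Delta}\subseteq\ms{\alpha}$ upgrades automatically to $\pms{\Delta}\subseteq\pms{\alpha}$ precisely because the members of $\pms{\Delta}$ carry positive probability. The hypothesis $\pms{\Delta}\neq\emptyset$ is exactly what rules out the degenerate $0/0$ case and so plays the same role here as in the statement of Theorem \ref{sec42:the1}.
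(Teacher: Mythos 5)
Your proof is correct and follows essentially the same route as the paper's: apply Theorem \ref{sec42:the1} and then argue that the ratio equals one precisely when $\pms{\Delta}\subseteq\pms{\alpha}$, using the fact that members of $\pms{\Delta}$ carry positive probability. You simply spell out more explicitly the step the paper compresses into replacing $\ms{\alpha}_{m_{n}}$ by $\pms{\alpha}_{m_{n}}$ under the sum.
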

%%%%%%%%%%%%%%%%%%%%%%%%%%%%%%%%%%%%%%%%%%%%%%%%%%%%%%%%%%%%%%%%%%%%%%%%%%%%%%%%%%%%%%%%%%%%%%%%%%%%%%%%%%%%%%%%%%%%%%%%%%%%%%%%%%%%%%%%%%%%%%%%%%%%%%%%%%%%%%%%%%%%%%%%%%%%%%%%
\subsection{Reasoning from impossible information}\label{sec33}
This section aims to logically characterise the inference of predicative abstraction with $\mu\to 1$. We use the maximal possible set originally defined for propositional logic.
\begin{definition}[\cite{kido:24-1}]\label{sec2:def5}
Let $\Delta\subseteq L$. $S\subseteq\Delta$ is a maximal possible subset of $\Delta$ if $\pms{S}\neq\emptyset$ and $\pms{S\cup\{\alpha\}}=\emptyset$, for all $\alpha\in\Delta\setminus S$.
\end{definition}
As usual, $S\subseteq\Delta$ is a maximal consistent subset of $\Delta$ if $\ms{S}\neq\emptyset$ and $\ms{S\cup\{\alpha\}}=\emptyset$, for all $\alpha\in\Delta\setminus S$. A maximal possible set is thus a probabilistic generalisation of a maximal consistent set. We use the symbol $MPS(\Delta)$ and $MCS(\Delta)$ to denote the set of the cardinality-maximal possible subsets of $\Delta$ and the set of cardinality-maximal consistent subsets of $\Delta$, respectively.
\begin{example}\label{sec2:ex4}
Let us discuss examples of maximal consistent sets and maximal possible sets in propositional logic. Consider the hierarchy shown on the left in Figure \ref{sec3:fig1}. Let $\Delta=\{rain, sprinkler, sprinkler\to wet, hot, wet, \lnot wet\}$. The set of the maximal consistent subsets of $\Delta$ is $\{S_{1}, S_{2}, S_{3}\}$ given as follows.
\begin{itemize}
\item $S_{1}=\{rain, sprinkler, sprinkler\to wet, hot, wet\}$ where $\ms{S_{1}}=\{m_{6},m_{8}\}$
\item $S_{2}=\{rain, sprinkler, hot, \lnot wet\}$ where $\ms{S_{2}}=\{m_{14},m_{16}\}$
\item $S_{3}=\{rain, sprinkler\to wet, hot, \lnot wet\}$ where $\ms{S_{3}}=\{m_{10},m_{12}\}$
\end{itemize}
Therefore, the set of the cardinality-maximal consistent subsets of $\Delta$ is $\{S_{1}\}$, i.e., $MCS(\Delta)=\{S_{1}\}$. Meanwhile, the set of the maximal possible subsets of $\Delta$ is $\{S_{4}, S_{5}, S_{6}\}$ given as follows.
\begin{itemize}
\item $S_{4}=\{rain, sprinkler\to wet, hot, wet\}$ where $\pms{S_{4}}=\{m_{4}\}$
\item $S_{5}=\{sprinkler, sprinkler\to wet, hot, wet\}$ where $\pms{S_{5}}=\{m_{24}\}$
\item $S_{6}=\{sprinkler\to wet, hot, \lnot wet\}$ where $\pms{S_{6}}=\{m_{26},m_{28}\}$
\end{itemize}
Therefore, the set of the cardinality-maximal possible subsets of $\Delta$ is $\{S_{4},S_{5}\}$, i.e., $MPS(\Delta)=\{S_{4},S_{5}\}$.
\end{example}
We often use the symbol $\pams{\Delta}$ to denote the set of possible models where all the formulas in a cardinality-maximal possible subset of $\Delta$ are true, i.e., $\pams{\Delta}=\bigcup_{S\in MPS(\Delta)}\pms{S}$. We also use the symbol $\ams{\Delta}$ to denote the set of models where all the formulas in a cardinality-maximal consistent subset of $\Delta$ are true, i.e., $\ams{\Delta}=\bigcup_{S\in MCS(\Delta)}\ms{S}$. 
\begin{theorem}\label{sec4:the2} 
Let $\alpha\in L$ and $\Delta\subseteq L$.
\begin{eqnarray*}
p_{\mu\to1}(\alpha|\Delta)=\frac{\sum_{n}\ms{\alpha}_{m_{n}}(\bigcup_{S\in MPS(\Delta)}\pms{S})_{m_{n}}p(m_{n})}{\sum_{n}(\bigcup_{S\in MPS(\Delta)}\pms{S})_{m_{n}}p(m_{n})}
\end{eqnarray*}
\end{theorem}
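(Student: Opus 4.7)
The plan is to mimic the proof of Theorem \ref{sec42:the1} but keep track of the powers of $(1-\mu)$ carefully before taking the limit, since now $\pms{\Delta}$ may be empty and we cannot simply drop inconsistent terms. Starting from $p(\alpha|\Delta)=p(\alpha,\Delta)/p(\Delta)$, I would apply the sum rule and product rule, followed by the conditional independence of Proposition \ref{sec2:prop1}, to write
\begin{align*}
p(\alpha|\Delta)=\frac{\sum_{n} p(\alpha|m_{n})\prod_{\beta\in\Delta}p(\beta|m_{n})p(m_{n})}{\sum_{n}\prod_{\beta\in\Delta}p(\beta|m_{n})p(m_{n})}.
\end{align*}
Using Equation (\ref{sec3:eq3}), each conditional $p(\beta|m_{n})$ equals $\mu$ when $m_{n}\in\ms{\beta}$ and $1-\mu$ otherwise. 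Hence, letting $k_{n}=|\{\beta\in\Delta\mid m_{n}\in\ms{\beta}\}|$, the product $\prod_{\beta\in\Delta}p(\beta|m_{n})$ simplifies to $\mu^{k_{n}}(1-\mu)^{|\Delta|-k_{n}}$.

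Next I would identify the dominant order in $(1-\mu)$. Set $M=\max\{k_{n}\mid p(m_{n})\neq 0\}$. The crucial combinatorial observation is that $M$ coincides with the common cardinality of the subsets in $MPS(\Delta)$, and moreover that a possible model $m_{n}$ achieves $k_{n}=M$ if and only if $m_{n}\in\pams{\Delta}=\bigcup_{S\in MPS(\Delta)}\pms{S}$. The forward direction uses the maximality condition $\pms{S\cup\{\alpha\}}=\emptyset$ (any further formula satisfied by such $m_{n}$ could be added, contradicting maximality); the reverse direction is immediate from the definition of $\pms{S}$ and the maximality of $|S|$. I expect this equivalence to be the main obstacle, because it is the step that links the probabilistic definition to the combinatorial definition of $MPS(\Delta)$.

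With this lemma in hand, I would factor $(1-\mu)^{|\Delta|-M}$ out of both numerator and denominator, so that it cancels:
\begin{align*}
p(\alpha|\Delta)=\frac{\sum_{n:p(m_{n})\neq 0} p(\alpha|m_{n})\mu^{k_{n}}(1-\mu)^{M-k_{n}}p(m_{n})}{\sum_{n:p(m_{n})\neq 0}\mu^{k_{n}}(1-\mu)^{M-k_{n}}p(m_{n})}.
\end{align*}
Now I would take the limit $\mu\to 1$. In every term with $k_{n}<M$ the factor $(1-\mu)^{M-k_{n}}$ vanishes, while in every term with $k_{n}=M$, i.e.\ $m_{n}\in\pams{\Delta}$, the factor tends to $1$. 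Combined with $p(\alpha|m_{n})\to\ms{\alpha}_{m_{n}}$ in the same limit, only models in $\pams{\Delta}$ survive, yielding
\begin{align*}
p_{\mu\to 1}(\alpha|\Delta)=\frac{\sum_{n}\ms{\alpha}_{m_{n}}\pams{\Delta}_{m_{n}}p(m_{n})}{\sum_{n}\pams{\Delta}_{m_{n}}p(m_{n})},
\end{align*}
which is the claim. A small final remark: because $MPS(\Delta)$ is always nonempty for finite $\Delta$ (the empty set is a possible subset unless there are no possible models at all), the denominator is nonzero provided some possible model exists, so no analogous $\pms{\Delta}\neq\emptyset$ hypothesis is needed here.
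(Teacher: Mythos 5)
Your proposal is correct and follows essentially the same route as the paper's proof: expand $p(\Delta|m_{n})$ as $\mu^{|\Delta|_{m_{n}}}(1-\mu)^{|\Delta|-|\Delta|_{m_{n}}}$, observe that the possible models maximising $|\Delta|_{m_{n}}$ are exactly those in $\bigcup_{S\in MPS(\Delta)}\pms{S}$, factor out the dominant power of $1-\mu$, and pass to the limit. Your explicit statement and justification of the combinatorial lemma identifying $M$ with the common cardinality of the sets in $MPS(\Delta)$ is, if anything, slightly more careful than the paper's terse assertion of the same fact.
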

\begin{proof}
Let the symbols $|\Delta|$ and $|\Delta|_{m_{n}}$ denote the number of formulas in $\Delta$ and the number of formulas in $\Delta$ that are true in the model $m_{n}$, i.e., $|\Delta|_{m_{n}}=\sum_{\beta\in\Delta}\m{\beta}_{m_{n}}$, respectively. Using the definition of conditional probability and the conditional independence we showed in the previous section, we have
\begin{eqnarray*}
&&p(\alpha|\Delta)=\frac{p(\alpha,\Delta)}{p(\Delta)}=\frac{\sum_{n}p(\alpha,\Delta,m_{n})}{\sum_{n}p(\Delta,m_{n})}=\frac{\sum_{n}p(\alpha|m_{n})p(\Delta|m_{n})p(m_{n})}{\sum_{n}p(\Delta|m_{n})p(m_{n})}.
\end{eqnarray*}
Dividing models into ones in $\pams{\Delta}$ and the others, we have
\begin{eqnarray*}
&&p(\alpha|\Delta)=\frac{\textstyle{\sum_{\hat{m}_{n}\in\pams{\Delta}}p(\alpha|\hat{m}_{n})p(\Delta|\hat{m}_{n})p(\hat{m}_{n})+\sum_{m_{n}\notin\pams{\Delta}}p(\alpha|m_{n})p(\Delta|m_{n})p(m_{n})}}{\textstyle{\sum_{\hat{m}_{n}\in\pams{\Delta}}p(\Delta|\hat{m}_{n})p(\hat{m}_{n})+\sum_{m_{n}\notin\pams{\Delta}}p(\Delta|m_{n})p(m_{n})}}.
\end{eqnarray*}
$p(\Delta|m_{n})$ can be developed as follows, for all models $m_{n}$.
\begin{align*}
p(\Delta|m_{n})&=\prod_{\beta\in\Delta}p(\beta|m_{n})=\prod_{\beta\in\Delta}\mu^{\ms{\beta}_{m_{n}}}(1-\mu)^{1-\ms{\beta}_{m_{n}}}\\
&=\mu^{\sum_{\beta\in\Delta}\ms{\beta}_{m_{n}}}(1-\mu)^{\sum_{\beta\in\Delta}(1-\ms{\beta}_{m_{n}})}=\mu^{|\Delta|_{m_{n}}}(1-\mu)^{|\Delta|-|\Delta|_{m_{n}}}
\end{align*}
Therefore, $p(\alpha|\Delta)=\lim_{\mu\rightarrow 1}\frac{W+X}{Y+Z}$ where
\begin{align*}
W&=\textstyle{\sum_{\hat{m}_{n}\in\pams{\Delta}}p(\alpha|\hat{m}_{n})\mu^{|\Delta|_{\hat{m}_{n}}}(1-\mu)^{|\Delta|-|\Delta|_{\hat{m}_{n}}}p(\hat{m}_{n})}\\
X&=\textstyle{\sum_{m_{n}\notin\pams{\Delta}}p(\alpha|m_{n})\mu^{|\Delta|_{m_{n}}}(1-\mu)^{|\Delta|-|\Delta|_{m_{n}}}p(m_{n})}\\
Y&=\textstyle{\sum_{\hat{m}_{n}\in\pams{\Delta}}\mu^{|\Delta|_{\hat{m}_{n}}}(1-\mu)^{|\Delta|-|\Delta|_{\hat{m}_{n}}}p(\hat{m}_{n})}\\
Z&=\textstyle{\sum_{m\notin\pams{\Delta}}\mu^{|\Delta|_{m_{n}}}(1-\mu)^{|\Delta|-|\Delta|_{m_{n}}}p(m_{n})}.
\end{align*}
If $m_{n}\notin\pams{\Delta}$ then $m_{n}$ is impossible or $m_{n}$ is a possible model of a subset of $\Delta$ that is not a cardinality-maximal possible subset of $\Delta$. Therefore, $p(m_{n})=0$ or there is $\hat{m}_{n}\in\pams{\Delta}$ such that $|\Delta|_{m_{n}}<|\Delta|_{\hat{m}_{n}}$. $|\Delta|_{\hat{m}_{1}}=|\Delta|_{\hat{m}_{2}}$ by definition, for all $\hat{m}_{1},\hat{m}_{2}\in\pams{\Delta}$. The fraction thus can be simplified by dividing the denominator and numerator by $(1-\mu)^{|\Delta|-|\Delta|_{\hat{m}_{n}}}$. We thus have $p(\alpha|\Delta)=\lim_{\mu\rightarrow 1}\frac{W'+X'}{Y'+Z'}$ where
\begin{align*}
W'&=\textstyle{\sum_{\hat{m}_{n}\in\pams{\Delta}}p(\alpha|\hat{m}_{n})\mu^{|\Delta|_{\hat{m}_{n}}}p(\hat{m}_{n})}\\
X'&=\textstyle{\sum_{m_{n}\notin\pams{\Delta}}p(\alpha|m_{n})\mu^{|\Delta|_{m_{n}}}(1-\mu)^{|\Delta|_{\hat{m}_{n}}-|\Delta|_{m_{n}}}p(m_{n})}\\
Y'&=\textstyle{\sum_{\hat{m}_{n}\in\pams{\Delta}}\mu^{|\Delta|_{\hat{m}_{n}}}p(\hat{m}_{n})}\\
Z'&=\textstyle{\sum_{m_{n}\notin\pams{\Delta}}\mu^{|\Delta|_{m_{n}}}(1-\mu)^{|\Delta|_{\hat{m}_{n}}-|\Delta|_{m_{n}}}p(m_{n})}.
\end{align*}
Applying the limit, we can cancel out $X'$ and $Z'$.
\begin{align*}
p(\alpha|\Delta)=\lim_{\mu\to1}\frac{\textstyle{\sum_{\hat{m}_{n}\in\pams{\Delta}}p(\alpha|\hat{m}_{n})p(\hat{m}_{n})}}{\textstyle{\sum_{\hat{m}_{n}\in\pams{\Delta}}p(\hat{m}_{n})}}=\frac{\textstyle{\sum_{\hat{m}_{n}\in\pams{\Delta}}1^{\ms{\alpha}_{\hat{m}}}0^{1-\ms{\alpha}_{\hat{m}_{n}}}p(\hat{m}_{n})}}{\textstyle{\sum_{\hat{m}_{n}\in\pams{\Delta}}p(\hat{m}_{n})}}.
\end{align*}
By convention, $1^{\ms{\alpha}_{\hat{m}_{n}}}0^{1-\ms{\alpha}_{\hat{m}_{n}}}=1^{1}0^{0}=1$ if $\hat{m}_{n}\in\ms{\alpha}$ and $1^{\ms{\alpha}_{\hat{m}_{n}}}0^{1-\ms{\alpha}_{\hat{m}_{n}}}=1^{0}0^{1}=0$ otherwise. Therefore,
\begin{align}\label{sec4:eq1}
p(\alpha|\Delta)=\frac{\textstyle{\sum_{\hat{m}_{n}\in\pams{\Delta}}\ms{\alpha}_{\hat{m}_{n}}p(\hat{m}_{n})}}{\textstyle{\sum_{\hat{m}_{n}\in\pams{\Delta}}p(\hat{m}_{n})}}=\frac{\textstyle{\sum_{n}\ms{\alpha}_{m_{n}}\pams{\Delta}_{m_{n}}p(m_{n})}}{\textstyle{\sum_{n}\pams{\Delta}_{m_{n}}p(m_{n})}}.
\end{align}
\end{proof}
The denominator of Theorem \ref{sec4:the2} cannot be zero. For example, $\pams{\emptyset}=\bigcup_{S\in MPS(\emptyset)}\pms{S}=\pms{\emptyset}$, where $\pms{\emptyset}$ is the set of all possible models. All models cannot be impossible by definition.
\begin{example}[Continued.]\label{sec4:ex3}
Given that everyone blames someone, everyone is blamed by someone, and someone blames everyone, what is the probability that Alice blames Bob? Let us use $\alpha$ to denote $Blames(x,y)$. Each compound formula has the following truth value.
\begin{align*}
&\pms{\forall x\exists y~\alpha}=\{m_{4},m_{11},m_{13}\}\text{ (See Example \ref{sec4:ex2}.)}\\
&\pms{\forall y\exists x~\alpha}=\{m_{4},m_{7},m_{13}\}\text{ (See Example \ref{sec4:ex2}.)}\\
&\pms{\exists x\forall y~\alpha}_{m_{n}}=\max_{c_{1}\in{\cal C}}\big\{\pms{\forall y~Blames(c_{1},y)}_{m_{n}}\big\}=\max_{c_{1}\in{\cal C}}\big\{\min_{c_{2}\in{\cal C}}\big\{\pms{Blames(c_{1},c_{2})}_{m_{n}}\big\}\big\}\\
&=\max\left\{\begin{array}{l}
\min\big\{\pms{Blames(alice,alice)}_{m_{n}}, \pms{Blames(alice,bob)}_{m_{n}}\big\},\\
\min\big\{\pms{Blames(bob,alice)}_{m_{n}}, \pms{Blames(bob,bob)}_{m_{n}}\big\}
\end{array}\right\}=
\begin{cases}
1&\text{if } n\in\{7\}\\
0&\text{otherwise }
\end{cases}
\end{align*}
Now, we can apply Theorem \ref{sec4:the2} to find the probability.
\begin{eqnarray*}
&&p(Blames(alice,bob)|\forall x\exists y~\alpha, \forall y\exists x~\alpha, \exists x\forall y~\alpha)\\
&=&\frac{\textstyle{\sum_{n}\ms{Blames(alice,bob)}_{m_{n}}\pams{\forall x\exists y~\alpha, \forall y\exists x~\alpha, \exists x\forall y~\alpha}_{m_{n}}p(m_{n})}}{\textstyle{\sum_{n}\pams{\forall x\exists y~\alpha, \forall y\exists x~\alpha, \exists x\forall y~\alpha}_{m_{n}}p(m_{n})}}\\
&=&\frac{\textstyle{\sum_{n}\ms{Blames(alice,bob)}_{m_{n}}(\pms{\forall x\exists y~\alpha, \forall y\exists x~\alpha}\cup\pms{\forall y\exists x~\alpha, \exists x\forall y~\alpha})_{m_{n}}p(m_{n})}}{\textstyle{\sum_{n}(\pms{\forall x\exists y~\alpha, \forall y\exists x~\alpha}\cup\pms{\forall y\exists x~\alpha, \exists x\forall y~\alpha})_{m_{n}}p(m_{n})}}\\
&=&\frac{\textstyle{\sum_{n\in\{4,7\}}p(m_{n})}}{\textstyle{\sum_{n\in\{4,7,13\}}p(m_{n})}}=\frac{\sum_{k}\{p(m_{4}|d_{k})+p(m_{7}|d_{k})\}}{\sum_{k}\{p(m_{4}|d_{k})+p(m_{7}|d_{k})+p(m_{13}|d_{k})\}}=\frac{4+6}{4+6+2}=\frac{5}{6}
\end{eqnarray*}
We added the marginalised data in the last line. $p(d_{k})$ is canceled out due to its uniformity.
\end{example}
\par
Theorem \ref{sec4:the2} implies the following properties, and their proofs can be found in Appendix \ref{sec:appendix}. The certain reasoning, i.e., reasoning with a probability of one, can be characterised in terms of the empirical consequence relation and its application to cardinality-maximal possible sets.
\begin{corollary}\label{sec4:cor1} 
Let $\alpha\in L$ and $\Delta\subseteq L$. $p_{\mu\to1}(\alpha|\Delta)=1$ if and only if $S\ent\alpha$, for all cardinality-maximal possible subsets $S$ of $\Delta$.
\end{corollary}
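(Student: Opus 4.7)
The plan is to apply Theorem \ref{sec4:the2} directly and then reduce the condition $p_{\mu\to1}(\alpha|\Delta)=1$ to the set-inclusion condition that appears in Definition \ref{sec2:def4}. First, I would invoke Theorem \ref{sec4:the2} to write
\begin{eqnarray*}
p_{\mu\to1}(\alpha|\Delta)=\frac{\sum_{n}\ms{\alpha}_{m_{n}}\pams{\Delta}_{m_{n}}p(m_{n})}{\sum_{n}\pams{\Delta}_{m_{n}}p(m_{n})},
\end{eqnarray*}
using the abbreviation $\pams{\Delta}=\bigcup_{S\in MPS(\Delta)}\pms{S}$. As noted immediately after Theorem \ref{sec4:the2}, the denominator is strictly positive, so the ratio equals $1$ if and only if the numerator equals the denominator.

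Next, I would exploit that every factor appearing in both sums is nonnegative and that $\ms{\alpha}_{m_{n}},\pams{\Delta}_{m_{n}}\in\{0,1\}$. Term by term, the numerator is bounded above by the denominator, with equality iff $\ms{\alpha}_{m_{n}}=1$ for every $m_{n}$ contributing a strictly positive term on the right, i.e.\ for every $m_{n}\in\pams{\Delta}$ with $p(m_{n})>0$. Since each $\pms{S}$ is defined to include only models of nonzero probability, every model in $\pams{\Delta}$ already satisfies $p(m_{n})>0$. Hence the equality of numerator and denominator is equivalent to the set inclusion $\pams{\Delta}\subseteq\pms{\alpha}$.

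Finally, I would unfold $\pams{\Delta}=\bigcup_{S\in MPS(\Delta)}\pms{S}$: the inclusion $\bigcup_{S\in MPS(\Delta)}\pms{S}\subseteq\pms{\alpha}$ holds iff $\pms{S}\subseteq\pms{\alpha}$ for every $S\in MPS(\Delta)$, which by Definition \ref{sec2:def4} is precisely $S\ent\alpha$ for every cardinality-maximal possible subset $S$ of $\Delta$. The chain of equivalences then delivers both directions of the corollary simultaneously. The only subtle step is the middle one — namely, confirming that the equality of the weighted sums translates faithfully into a pointwise set inclusion — but since $\pams{\Delta}$ is by construction a union of possibility sets with nonzero-probability models, no separate ``restrict to possible models'' argument is needed and the reduction is immediate.
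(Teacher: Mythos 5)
Your proof is correct and follows essentially the same route as the paper: both apply the formula from Theorem \ref{sec4:the2}, reduce the condition $p_{\mu\to1}(\alpha|\Delta)=1$ to the inclusion $\pams{\Delta}\subseteq\pms{\alpha}$ (using that every model in $\pams{\Delta}$ has nonzero probability, so $\ms{\alpha}$ may be replaced by $\pms{\alpha}$ there), and then unfold $\pams{\Delta}=\bigcup_{S\in MPS(\Delta)}\pms{S}$ into the statement $S\ent\alpha$ for all $S\in MPS(\Delta)$. Your term-by-term justification of the equivalence between equality of the sums and the set inclusion is, if anything, spelled out more carefully than in the paper's version.
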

Let us assume $\pams{\Delta}=\ams{\Delta}$. This is the case when all the models satisfying $S$ are possible, for all cardinality-maximal consistent subsets $S$ of $\Delta$, i.e., $\pms{S}=\ms{S}$, for all $S\in MCS(\Delta)$. Then, the certain reasoning can be characterised in terms of the logical consequence relation and its application to cardinality-maximal consistent sets.
\begin{corollary}\label{sec4:cor2} 
Let $\alpha\in L$ and $\Delta\subseteq L$ such that $\pams{\Delta}=\ams{\Delta}$. $p_{\mu\to1}(\alpha|\Delta)=1$ if and only if $S\cent\alpha$, for all cardinality-maximal consistent subsets $S$ of $\Delta$.
\end{corollary}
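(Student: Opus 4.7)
The strategy is to reduce Corollary \ref{sec4:cor2} to Corollary \ref{sec4:cor1}, using the assumption $\pams{\Delta}=\ams{\Delta}$ to bridge the empirical consequence relation $\ent$ (phrased in terms of $\pms{\cdot}$) and the logical consequence relation $\cent$ (phrased in terms of $\ms{\cdot}$). Since both sides of the target equivalence are quantified conditions over families of subsets, I would first collapse each side to a single set inclusion, and then carry out the bridging argument on those inclusions.

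Concretely, I would unfold the two sides as follows. Corollary \ref{sec4:cor1} together with Definition \ref{sec2:def4} gives $p_{\mu\to 1}(\alpha|\Delta)=1$ iff $\pms{S}\subseteq\pms{\alpha}$ for every $S\in MPS(\Delta)$, which collapses by the definition $\pams{\Delta}=\bigcup_{S\in MPS(\Delta)}\pms{S}$ to the single inclusion $\pams{\Delta}\subseteq\pms{\alpha}$. Symmetrically, $S\cent\alpha$ for every $S\in MCS(\Delta)$ unfolds to $\ms{S}\subseteq\ms{\alpha}$ for each such $S$, which collapses to $\ams{\Delta}\subseteq\ms{\alpha}$. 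It remains to show these two inclusions are equivalent under $\pams{\Delta}=\ams{\Delta}$. Forward: chain $\pams{\Delta}\subseteq\pms{\alpha}\subseteq\ms{\alpha}$ and substitute the assumption to obtain $\ams{\Delta}\subseteq\ms{\alpha}$. Backward: from $\ams{\Delta}\subseteq\ms{\alpha}$ combined with the assumption, $\pams{\Delta}\subseteq\ms{\alpha}$; and since every $m_{n}\in\pams{\Delta}$ satisfies $p(m_{n})\neq 0$ by the very definition of $\pms{\cdot}$, any such $m_{n}\in\ms{\alpha}$ automatically lies in $\pms{\alpha}$, yielding $\pams{\Delta}\subseteq\pms{\alpha}$.

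The main, and essentially only, obstacle is the asymmetry between $\pms{\alpha}$ inherited from Corollary \ref{sec4:cor1} and the $\ms{\alpha}$ demanded by the target statement. This is neutralised by the observation that $\pams{\Delta}$ consists by construction of possible models, so intersecting $\ms{\alpha}$ with the possibility condition removes nothing from the perspective of the models being quantified over. Beyond this single observation, the argument is a routine chain of equivalences from the previous corollary together with the definitions of $\pams{\cdot}$, $\ams{\cdot}$, $\ent$ and $\cent$.
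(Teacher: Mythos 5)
Your proof is correct, but it is organised differently from the paper's. The paper proves Corollary \ref{sec4:cor2} by returning to the probability formula of Theorem \ref{sec4:the2}: it substitutes $\ams{\Delta}$ for $\pams{\Delta}$ in Equation (\ref{sec4:eq1}), observes that every model in $\ams{\Delta}=\pams{\Delta}$ has nonzero probability, and reads off that the fraction equals one iff $\ams{\Delta}\subseteq\ms{\alpha}$, which then unfolds to $S\cent\alpha$ for each cardinality-maximal consistent subset $S$. You instead treat Corollary \ref{sec4:cor1} as a black box --- never touching the probability computation --- and do all the work at the level of set inclusions, showing that $\pams{\Delta}\subseteq\pms{\alpha}$ and $\ams{\Delta}\subseteq\ms{\alpha}$ are equivalent under the hypothesis. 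The crucial observation is the same in both arguments: every model in $\pams{\Delta}=\ams{\Delta}$ is possible, so on that set membership in $\ms{\alpha}$ and in $\pms{\alpha}$ coincide; you deploy it in the backward direction of your bridge, while the paper deploys it to justify reading the fraction as an inclusion into $\ms{\alpha}$ rather than $\pms{\alpha}$. Your route has the merit of making the dependency structure explicit --- Corollary \ref{sec4:cor2} really is Corollary \ref{sec4:cor1} plus a purely set-theoretic translation --- and it matches the summary diagram at the end of Section \ref{sec33}, which draws an arrow from Corollary \ref{sec4:cor1} to Corollary \ref{sec4:cor2} labelled by exactly the hypothesis $\pams{\Delta}=\ams{\Delta}$; the paper's own proof, by contrast, re-derives the inclusion from the theorem. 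The only step worth writing out explicitly in your forward direction is the containment $\pms{\alpha}\subseteq\ms{\alpha}$; it is immediate from the definition $\pms{\alpha}=\{m_{n}\in\ms{\alpha}\mid p(m_{n})\neq 0\}$, but it is doing real work there.
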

Let us assume that there is a possible model satisfying $\Delta$, i.e., $\pms{\Delta}\neq\emptyset$. Then, the certain reasoning can be characterised in terms of the empirical consequence relation.
\begin{corollary}\label{sec4:cor3} 
Let $\alpha\in L$ and $\Delta\subseteq L$ such that $\pms{\Delta}\neq\emptyset$. $p_{\mu\to1}(\alpha|\Delta)=1$ if and only if $\Delta\ent\alpha$.
\end{corollary}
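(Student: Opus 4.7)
The plan is to reduce Corollary \ref{sec4:cor3} to Theorem \ref{sec4:the2} via a simple structural observation about $MPS(\Delta)$ under the hypothesis $\pms{\Delta}\neq\emptyset$.

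First I would show that the unique cardinality-maximal possible subset of $\Delta$ is $\Delta$ itself, so $MPS(\Delta)=\{\Delta\}$ and consequently $\pams{\Delta}=\bigcup_{S\in MPS(\Delta)}\pms{S}=\pms{\Delta}$. The argument is a direct application of Definition \ref{sec2:def5}: since $\pms{\Delta}\neq\emptyset$, the set $\Delta$ itself is possible, and the condition $\pms{\Delta\cup\{\beta\}}=\emptyset$ for all $\beta\in\Delta\setminus\Delta$ is vacuous, so $\Delta$ is a maximal possible subset of $\Delta$. Any $S\in MPS(\Delta)$ must then have cardinality at least $|\Delta|$, and combined with $S\subseteq\Delta$ this forces $S=\Delta$.

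Next I would substitute $\pams{\Delta}=\pms{\Delta}$ into the formula of Theorem \ref{sec4:the2} to get
\begin{eqnarray*}
p_{\mu\to1}(\alpha|\Delta)=\frac{\sum_{n}\ms{\alpha}_{m_{n}}\pms{\Delta}_{m_{n}}p(m_{n})}{\sum_{n}\pms{\Delta}_{m_{n}}p(m_{n})}.
\end{eqnarray*}
The denominator is strictly positive because $\pms{\Delta}\neq\emptyset$ supplies at least one $m_{n}$ with $\pms{\Delta}_{m_{n}}p(m_{n})>0$. Since $\ms{\alpha}_{m_{n}}\in\{0,1\}$, the ratio equals $1$ iff every $m_{n}$ with $\pms{\Delta}_{m_{n}}p(m_{n})>0$ satisfies $\ms{\alpha}_{m_{n}}=1$, which is precisely the set inclusion $\pms{\Delta}\subseteq\ms{\alpha}$.

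Finally I would verify that $\pms{\Delta}\subseteq\ms{\alpha}$ coincides with $\pms{\Delta}\subseteq\pms{\alpha}$, so that the condition reduces to $\Delta\ent\alpha$ by Definition \ref{sec2:def4}. The reverse inclusion is immediate from $\pms{\alpha}\subseteq\ms{\alpha}$, and the forward direction uses that every $m\in\pms{\Delta}$ already has $p(m)\neq 0$, so $m\in\ms{\alpha}$ immediately gives $m\in\pms{\alpha}$. No single step is a serious obstacle; the whole corollary is a clean specialisation of Theorem \ref{sec4:the2}, and the only place where the hypothesis $\pms{\Delta}\neq\emptyset$ does real work is the opening reduction $MPS(\Delta)=\{\Delta\}$, which is where I would be most careful to appeal to the exact wording of Definition \ref{sec2:def5}.
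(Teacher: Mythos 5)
Your proposal is correct and follows essentially the same route as the paper: both hinge on the observation that $\pms{\Delta}\neq\emptyset$ forces $MPS(\Delta)=\{\Delta\}$, hence $\pams{\Delta}=\pms{\Delta}$, after which Theorem \ref{sec4:the2} reduces to the inclusion $\pms{\Delta}\subseteq\pms{\alpha}$. Your treatment is in fact slightly more explicit than the paper's (which asserts the $MPS$ identity without argument and draws on an equation from the proof of Corollary \ref{sec4:cor1}), but the underlying argument is the same.
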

Let us assume $\pams{\Delta}=\ams{\Delta}$ and $\pms{\Delta}\neq\emptyset$. The certain reasoning can then be characterised in terms of the logical consequence relation.
\begin{corollary}\label{sec4:cor4} 
Let $\alpha\in L$ and $\Delta\subseteq L$ such that $\pams{\Delta}=\ams{\Delta}$ and $\pms{\Delta}\neq\emptyset$. $p_{\mu\to1}(\alpha|\Delta)=1$ if and only if $\Delta\cent\alpha$.
\end{corollary}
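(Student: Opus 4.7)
The plan is to derive Corollary \ref{sec4:cor4} directly from Corollary \ref{sec4:cor2} by using the additional hypothesis $\pms{\Delta}\neq\emptyset$ to collapse the quantification ``for all cardinality-maximal consistent $S\subseteq\Delta$'' into a single statement about $\Delta$ itself. Since $\pms{\Delta}\subseteq\ms{\Delta}$, the new hypothesis also gives $\ms{\Delta}\neq\emptyset$, which is exactly what is needed to pin down the structure of $MCS(\Delta)$.

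First I would verify that $\Delta$ itself is a maximal consistent subset of $\Delta$: consistency holds because $\ms{\Delta}\neq\emptyset$, and the maximality clause ``$\ms{S\cup\{\alpha\}}=\emptyset$ for all $\alpha\in\Delta\setminus S$'' is trivially satisfied since $\Delta\setminus\Delta=\emptyset$. Next I would argue that no proper subset $S'\subsetneq\Delta$ can be maximal consistent: for any $\beta\in\Delta\setminus S'$ one has $\ms{S'\cup\{\beta\}}\supseteq\ms{\Delta}\neq\emptyset$, contradicting the required emptiness. Therefore $MCS(\Delta)=\{\Delta\}$, so the cardinality-maximal consistent subset of $\Delta$ is uniquely $\Delta$ itself.

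With this in hand, Corollary \ref{sec4:cor2} applies verbatim (its hypothesis $\pams{\Delta}=\ams{\Delta}$ is inherited unchanged) and yields $p_{\mu\to 1}(\alpha|\Delta)=1$ iff $S\cent\alpha$ for every cardinality-maximal consistent $S\subseteq\Delta$. The previous step reduces this universally quantified condition to the single instance $\Delta\cent\alpha$, closing the biconditional.

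The main obstacle is essentially bookkeeping: making sure the definition of maximal consistent subset does force $MCS(\Delta)=\{\Delta\}$ under $\pms{\Delta}\neq\emptyset$ (and not merely some larger family), and confirming that the hypothesis of Corollary \ref{sec4:cor2} transfers without loss. No new probabilistic limit argument or logical analysis beyond what the earlier corollaries already provide is required; the result is, by design, the corner of the characterisation grid where both of the simplifying assumptions act simultaneously.
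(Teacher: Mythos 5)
Your proposal is correct, and it takes a somewhat different route from the paper's own proof. The paper does not pass through Corollary \ref{sec4:cor2} at all: it goes back to Equation (\ref{sec4:eq1}) from the proof of Theorem \ref{sec4:the2}, uses $\pms{\Delta}\neq\emptyset$ to get $MPS(\Delta)=\{\Delta\}$ and hence $\pams{\Delta}=\pms{\Delta}$, then observes that $\ms{\Delta}\supseteq\pms{\Delta}\neq\emptyset$ forces $\ams{\Delta}=\ms{\Delta}$, so the hypothesis $\pams{\Delta}=\ams{\Delta}$ collapses all four sets to $\ms{\Delta}=\pms{\Delta}$; substituting into the fraction and noting every model of $\Delta$ is possible gives $p(\alpha|\Delta)=1$ iff $\ms{\Delta}\subseteq\ms{\alpha}$. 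You instead work entirely on the logical side: your observation that $\ms{\Delta}\neq\emptyset$ makes $\Delta$ the unique (hence unique cardinality-maximal) consistent subset of itself is sound --- consistency of $\Delta$ is immediate, maximality is vacuous, and no proper $S'\subsetneq\Delta$ can satisfy $\ms{S'\cup\{\beta\}}=\emptyset$ since $\ms{S'\cup\{\beta\}}\supseteq\ms{\Delta}\neq\emptyset$ --- and Corollary \ref{sec4:cor2} then applies with its hypothesis inherited verbatim. Your reduction is arguably the cleaner derivation (it matches the paper's own summary diagram, which draws an arrow from Corollary \ref{sec4:cor2} to Corollary \ref{sec4:cor4} labelled $\pms{\Delta}\neq\emptyset$, even though the written proof does not follow that arrow), and it avoids touching the probabilistic machinery again; the paper's version is more self-contained and makes explicit the identity $\pams{\Delta}=\pms{\Delta}=\ams{\Delta}=\ms{\Delta}$, which is what guarantees the denominator is nonzero and that no null-probability models of $\Delta$ can break the forward direction. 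Both arguments are valid.
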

The following fact shows that the assumption of $\pms{\Delta}\neq\emptyset$ makes $\mu=1$ and $\mu\to 1$ equivalent.
\begin{corollary}\label{sec4:cor5} 
Let $\alpha\in L$ and $\Delta\subseteq L$ such that $\pms{\Delta}\neq\emptyset$. $p_{\mu\to1}(\alpha|\Delta)=p_{\mu=1}(\alpha|\Delta)$.
\end{corollary}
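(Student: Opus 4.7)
The plan is to apply Theorems \ref{sec42:the1} and \ref{sec4:the2} directly and show that their right-hand sides coincide under the hypothesis $\pms{\Delta}\neq\emptyset$. Theorem \ref{sec42:the1} expresses $p_{\mu=1}(\alpha|\Delta)$ as the ratio of $\sum_{n}\ms{\alpha}_{m_{n}}\pms{\Delta}_{m_{n}}p(m_{n})$ to $\sum_{n}\pms{\Delta}_{m_{n}}p(m_{n})$, whereas Theorem \ref{sec4:the2} expresses $p_{\mu\to1}(\alpha|\Delta)$ by the same formula except with $\pms{\Delta}$ replaced by $\pams{\Delta}=\bigcup_{S\in MPS(\Delta)}\pms{S}$. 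It therefore suffices to prove that $\pams{\Delta}=\pms{\Delta}$ under the hypothesis.

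The key observation is that $\Delta$ is itself the unique cardinality-maximal possible subset of $\Delta$ when $\pms{\Delta}\neq\emptyset$. Indeed, Definition \ref{sec2:def5} requires that a maximal possible subset $S$ satisfy (i) $\pms{S}\neq\emptyset$ and (ii) $\pms{S\cup\{\beta\}}=\emptyset$ for every $\beta\in\Delta\setminus S$. Taking $S=\Delta$, condition (i) is exactly the hypothesis and condition (ii) holds vacuously since $\Delta\setminus\Delta=\emptyset$. Moreover, no proper subset of $\Delta$ can be cardinality-maximal, as it has strictly smaller cardinality than $\Delta$. Hence $MPS(\Delta)=\{\Delta\}$ and $\pams{\Delta}=\pms{\Delta}$.

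Substituting this equality into the formula from Theorem \ref{sec4:the2} reproduces the formula from Theorem \ref{sec42:the1}, giving $p_{\mu\to1}(\alpha|\Delta)=p_{\mu=1}(\alpha|\Delta)$. There is essentially no obstacle here; the result amounts to the observation that when $\Delta$ is already possible as a whole, the mechanism in Theorem \ref{sec4:the2} for restoring a non-empty denominator by passing to cardinality-maximal possible subsets is never triggered, and the $\mu\to 1$ formula collapses to the $\mu=1$ formula. The only point worth verifying carefully is that Definition \ref{sec2:def5} treats $\Delta$ itself as a maximal possible subset under the hypothesis, which we confirmed through the vacuous second clause above.
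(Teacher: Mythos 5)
Your proof is correct, and it takes a more self-contained route than the paper. The paper disposes of this corollary in one line by citing Corollaries \ref{sec42:cor1} and \ref{sec4:cor3}; taken literally, those two results only establish that $p_{\mu=1}(\alpha|\Delta)=1$ iff $p_{\mu\to1}(\alpha|\Delta)=1$, and the full equality of the two conditional probabilities really rests on the identity $\pams{\Delta}=\pms{\Delta}$, which the paper establishes only inside the proof of Corollary \ref{sec4:cor3}. You extract exactly that identity and then compare the closed-form expressions of Theorems \ref{sec42:the1} and \ref{sec4:the2} directly, which yields the equality of the probabilities for every value, not merely the characterisation of when they equal one; in that sense your argument is the more complete one. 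One small refinement: your claim that no proper subset of $\Delta$ can be cardinality-maximal ``because it has strictly smaller cardinality'' is only airtight for finite $\Delta$; the robust reason, valid in general, is that any proper subset $S\subsetneq\Delta$ violates clause (ii) of Definition \ref{sec2:def5}, since for $\beta\in\Delta\setminus S$ one has $\pms{S\cup\{\beta\}}\supseteq\pms{\Delta}\neq\emptyset$, so $\Delta$ is in fact the \emph{unique} maximal possible subset and $MPS(\Delta)=\{\Delta\}$ follows without any cardinality comparison.
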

The following is the summary of Sections \ref{sec32} and \ref{sec33}. There is an arrow from property $x$ to property $y$ when $x$ is more general than $y$. Each arrow specifies the assumption of $y$ that makes it less general than $x$.
\begin{eqnarray*}
&\text{Theorem \ref{sec42:the1}} \xrightarrow[]{p(\alpha|\Delta)=1}& \text{Corollary \ref{sec42:cor1}}\\
&\text{Theorem \ref{sec4:the2}} \xrightarrow[]{p(\alpha|\Delta)=1}& \text{Corollary \ref{sec4:cor1}} \xrightarrow[]{\pams{\Delta}=\ams{\Delta}} \text{Corollary \ref{sec4:cor2}} \xrightarrow[]{\pms{\Delta}\neq\emptyset} \text{Corollary \ref{sec4:cor4}}\\
&&\text{Corollary \ref{sec4:cor1}} \xrightarrow[]{\pms{\Delta}\neq\emptyset} \text{Corollary \ref{sec4:cor3}} \xrightarrow[]{\pams{\Delta}=\ams{\Delta}} \text{Corollary \ref{sec4:cor4}}
\end{eqnarray*}
%%%%%%%%%%%%%%%%%%%%%%%%%%%%%%%%%%%%%%%%%%%%%%%%%%%%%%%%%%%%%%%%%%%%%%%%%%%%%%%%%%%%%%%%%%%%%%%%%%%%%%%%%%%%%%%%%%%%%%%%%%%%%%%%%%%%%%%%%%%%%%%%%%%%%%%%%%
\subsection{Applicability}\label{sec34}
This section aims to show the applicability of the inference of predicative abstraction. We discuss a solution to a simple example, similar to Examples  \ref{sec1:ex1} and \ref{sec1:ex2}, that is often difficult to solve by existing AI approaches. Consider the following simple arithmetic quiz for testing one's abstract and logical thinking skills.
\begin{example}\label{sec4.4:ex1}
Which one of the following data $d_{3}$, $d_{4}$ and $d_{5}$ is a companion of $d_{1}$ and $d_{2}$?
\begin{center}
\includegraphics[scale=0.3]{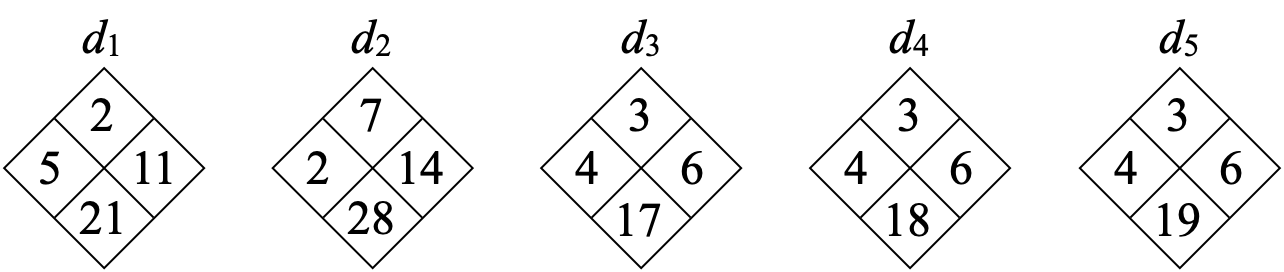}
\end{center}
\end{example}
The correct answer could be $d_{4}$. Only $d_{1}$, $d_{2}$ and $d_{4}$ satisfy the following arithmetic rule.
\begin{align*}
\textit{top}\times \textit{left} + \textit{right} = \textit{bottom}
\end{align*}
We assume the following vocabularies for the usual arithmetic operations.
\begin{eqnarray*}
&&{\cal C}=\{\text{`}i\text{'}, \textit{left, right, top, bottom} | i\in\mathbb{R}\}\\
&&{\cal F}=\{\text{`}+\text{'}, \text{`}-\text{'}, \text{`}\times\text{'}, \text{`}\div\text{'}\}\text{, where }ar(\cdot)=2\text{, for all `}\cdot\text{'}\in{\cal F}\\
&&{\cal P}=\{\text{`}=\text{'}\}\text{, where }ar(\text{`}=\text{'})=2
\end{eqnarray*}
${\cal C}$ comprises the constants for the real numbers $i\in \mathbb{R}$ and for the four numbers located to the left, right, top and bottom. We assume no variables, i.e., ${\cal V}=\emptyset$. Here, $\textit{left, right, top, bottom}$ are not variables since each of them has a single value for each data point. We omit the symbol `' when it is obvious that its inside is a vocabulary of a predicate language. We use the infix notation for readability. Let $w$, $x$, $y$ and $z$ be the numbers on the left, right, top and bottom of a data point, respectively. Since we are interested in the usual arithmetic operations, we assume the function $m$ that maps each $d_{k}$ to the model $m(d_{k})=\langle u(m(d_{k})), v(m(d_{k}))\rangle$ given as follows.
\begin{align*}
&u(m(d_{k}))=\mathbb{R}\\
&v(m(d_{k}))(\text{`}i\text{'})=i\text{, for all }i\in u(m(d_{k}))\\
&v(m(d_{k}))(\textit{left})=w\\
&v(m(d_{k}))(\textit{right})=x\\
&v(m(d_{k}))(\textit{top})=y\\
&v(m(d_{k}))(\textit{bottom})=z\\
&v(m(d_{k}))(\text{`}\cdot\text{'})(i,j)=i\cdot j\text{, for all }\text{`}\cdot\text{'}\in{\cal F}\\
&v(m(d_{k}))(\text{`='})=\{(i,i)|i\in u(m(d_{k}))\}
\end{align*}
Note that only the assignments to \textit{left}, \textit{right}, \textit{top}, \textit{bottom} depend on individual data, and the other assignments depend only on the arithmetic rule.
\par
Our solution is based on two modes of the inference of abstraction. The first mode is the inference over $d_{1}$ and $d_{2}$, which aims to extract an abstract rule from the concrete data using the following equation.
\begin{eqnarray}\label{sec44:sol1}
\hat{\alpha}&=&\argmax_{\alpha\in L}p(\alpha)
\end{eqnarray}
The second model is the inference over $d_{3}$, $d_{4}$ and $d_{5}$, which aims to apply the extracted abstract rule to the concrete data using the following equation.
\begin{eqnarray}\label{sec44:sol2}
\hat{\beta}&=&\argmax_{\beta\in L}p(\beta|\hat{\alpha})
\end{eqnarray}
Equations (\ref{sec44:sol1}) and (\ref{sec44:sol2}) are computationally intractable, since the predicate language generally has the infinite number of formulas. It is beyond the scope of this paper to fully discuss how to explore the infinite language space. We here simply search for a formula with a small number of logical connectives for a simple explanation, which conforms to Occam's razor. Now, let $\mu=1$. $\hat{\alpha}=(\textit{top}\times \textit{left} + \textit{right} = \textit{bottom})$ satisfies Equation (\ref{sec44:sol1}).
\begin{eqnarray*}
p(\hat{\alpha})&=&\sum_{n=1}^{\infty}\sum_{k=1}^{2}p(\hat{\alpha},m_{n},d_{k})=\sum_{n=1}^{\infty}\sum_{k=1}^{2}p(\hat{\alpha}|m_{n})p(m_{n}|d_{k})p(d_{k})\\
&=&\sum_{k=1}^{2}p(\hat{\alpha}|m(d_{k}))p(d_{k})=\frac{1}{2}\sum_{k=1}^{2}\ms{\hat{\alpha}}_{m(d_{k})}=1
\end{eqnarray*}
Now, $\hat{\beta}=(\textit{bottom}=18)$ satisfies Equation (\ref{sec44:sol2}).
\begin{eqnarray*}
p(\hat{\beta}|\hat{\alpha})&=&\frac{\sum_{n=1}^{\infty}\sum_{k=3}^{5}p(\hat{\beta},\hat{\alpha},m_{n},d_{k})}{\sum_{n=1}^{\infty}\sum_{k=3}^{5}p(\hat{\alpha},m_{n},d_{k})}=\frac{\sum_{n=1}^{\infty}\sum_{k=3}^{5}p(\hat{\beta}|m_{n})p(\hat{\alpha}|m_{n})p(m_{n}|d_{k})p(d_{k})}{\sum_{n=1}^{\infty}\sum_{k=3}^{5}p(\hat{\alpha}|m_{n})p(m_{n}|d_{k})p(d_{k})}\\
&=&\frac{\sum_{k=3}^{5}p(\hat{\beta}|m(d_{k}))p(\hat{\alpha}|m(d_{k}))}{\sum_{k=3}^{5}p(\hat{\alpha}|m(d_{k}))}=\frac{\sum_{k=3}^{5}\ms{\hat{\beta}}_{m(d_{k})}\ms{\hat{\alpha}}_{m(d_{k})}}{\sum_{k=3}^{5}\ms{\hat{\alpha}}_{m(d_{k})}}=\frac{1}{1}
\end{eqnarray*}
%%%%%%%%%%%%%%%%%%%%%%%%%%%%%%%%%%%%%%%%%%%%%%%%%%%%%%%%%%%%%%%%%%%%%%%%%%%%%%%%%%%%%%%%%%%%%%%%%%%%%%%%%%%%%%%%%%%%%%%%%%%%%%%%%%%%%%%%%%%%%%%%%%%%%%%%%%%%%%%%%%%%%%%%%%%%%%%%%%%%%%%%%%%%%%%%%%%%%%%%%%%%%%%%%%%%%%%%%%%%%%%%%%%%%%%%%%%%%%%%%%%%%%%%%%%%%%%%%%%%%%%%%%%%%%%%%%%%%%%%%%%%%%%%%%%%%%%%%%%%%%%%
\section{Conclusions and Future Work}\label{sec4}
In this paper, we asked how predicate reasoning should be grounded on data for meaningful abstraction. We proposed the inference of abstraction by simply modelling the idea that an intrinsically abstract predicate formula is a selective ignorance of the models of a predicate language and each of the models is a selective ignorance of intrinsically concrete observed data. We showed that the idea is not only enough to characterise the logical consequence relation of predicate logic but also to generalise it for the empirical consequence relation and its application to cardinality-maximal possible sets. The simple yet unconventional idea suggests a fresh perspective to rethink various important issues such as symbol grounding, computationally tractable predicate reasoning, commonsense and reasoning from an inconsistent and impossible source of information.
\par
An important challenge is to integrate connectionism to learn the function $m$ so that it accurately maps data to models (recall the top and middle layers in Figure \ref{sec3:fig1}). A more fundamental question is how an agent should decide which vocabularies to use in its predicate language for creative abstractions of data.
%%%%%%%%%%%%%%%%%%%%%%%%%%%%%%%%%%%%%%%%%%%%%%%%%%%%%%%%%%%%%%%%%%%%%%%%%%%%%%%%%%%%%%%%%%%%%%%%%%%%%%%%%%%%%%%%%%%%%%%%%%%%%%%%%%%%%%%%%%%%%%%%%%%%%%%%%%%%%%%%%%%%%%%%%%%%%%%%%%%%%%%%%%%%%%%%%%%%%%%%%%%%%%%%%%%%%%%%%%%%%%%%%%%%%%%%%%%%%%%%%%%%%%%%%%%%%%%%%%%%%%%%%%%%%%%%%%%%%%%%%%%%%%%%%%%%%%%%%%%%%%%%
\bibliography{btx_kido}
%%%%%%%%%%%%%%%%%%%%%%%%%%%%%%%%%%%%%%%%%%%%%%%%%%%%%%%%%%%%%%%%%%%%%%%%%%%%%%%%%%%%%%%%%%%%%%%%%%%%%%%%%%%%%%%%%%%%%%%%%%%%%%%%%%%%%%%%%%%%%%%%%%%%%%%%%%%%%%%%%%%%%%%%%%%%%%%%%%%%%%%%%%%%%%%%%%%%%%%%%%%%%%%%%%%%%%%%%%%%%%%%%%%%%%%%%%%%%%%%%%%%%%%%%%%%%%%%%%%%%%%%%%%%%%%%%%%%%%%%%%%%%%%%%%%%%%%%%%%%%%%%
\appendix
\section{Proofs}\label{sec:appendix}
\begin{proof}[Corollary \ref{sec42:cor1}]
$p(m_{n})=0$, for all $m_{n}\in\ms{\alpha}\setminus\pms{\alpha}$. From Theorem \ref{sec42:the1}, we thus have
\begin{eqnarray*}
p_{\mu=1}(\alpha|\Delta)=\frac{\sum_{n}\ms{\alpha}_{m_{n}}\pms{\Delta}_{m_{n}}p(m_{n})}{\sum_{n}\pms{\Delta}_{m_{n}}p(m_{n})}=\frac{\sum_{n}\pms{\alpha}_{m_{n}}\pms{\Delta}_{m_{n}}p(m_{n})}{\sum_{n}\pms{\Delta}_{m_{n}}p(m_{n})}.
\end{eqnarray*}
The above equation turns out to be one if and only if $\pms{\Delta}\subseteq\pms{\alpha}$, i.e., $\Delta\ent\alpha$.
\end{proof}
\begin{proof}[Corollary \ref{sec4:cor1}]
Recall that $S\ent\alpha$ is defined as $\pms{S}\subseteq\pms{\alpha}$. $\pms{S}\subseteq\pms{\alpha}$, for all $S\in MPS(\Delta)$ iff $\bigcup_{S\in MPS(\Delta)}\pms{S}\subseteq\pms{\alpha}$, i.e., $\pams{\Delta}\subseteq\pms{\alpha}$. Since $\sum_{m_{n}\in\pams{\Delta}}\ms{\alpha}_{m_{n}}p(m_{n})=$ $\sum_{m_{n}\in\pams{\Delta}}\pms{\alpha}_{m_{n}}p(m_{n})$, Equation (\ref{sec4:eq1}) can be further expanded as follows, where the resulting value is set to one.
\begin{eqnarray}\label{sec4:eq2}
p(\alpha|\Delta)=\frac{\textstyle{\sum_{n}\pms{\alpha}_{m_{n}}\pams{\Delta}_{m_{n}}p(m_{n})}}{\textstyle{\sum_{n}\pams{\Delta}_{m_{n}}p(m_{n})}}=1
\end{eqnarray}
There is thus no model $m_{n}$ such that $m_{n}\in\pms{\alpha}\setminus\pams{\Delta}$. Therefore, $\pams{\Delta}\subseteq\pms{\alpha}$.
\end{proof}
\begin{proof}[Corollary \ref{sec4:cor2}]
Since $\pams{\Delta}=\ams{\Delta}$, Equation (\ref{sec4:eq1}) can be expanded as follows, where the resulting value is set to one.
\begin{eqnarray}\label{sec4:eq3}
p(\alpha|\Delta)=\frac{\textstyle{\sum_{n}\ms{\alpha}_{m_{n}}\ams{\Delta}_{m_{n}}p(m_{n})}}{\textstyle{\sum_{n}\ams{\Delta}_{m_{n}}p(m_{n})}}=1
\end{eqnarray}
Since $\pams{\Delta}=\ams{\Delta}$, $p(m_{n})\neq 0$, for all $m_{n}\in\ams{\Delta}$. Thus, Equation (\ref{sec4:eq3}) holds iff $\ams{\Delta}\subseteq\ms{\alpha}$, i.e., $\bigcup_{S\in MCS(\Delta)}\ms{S}\subseteq\ms{\alpha}$. This holds iff $\ms{S}\subseteq\ms{\alpha}$, for all $S\in MCS(\Delta)$.
\end{proof}
\begin{proof}[Corollary \ref{sec4:cor3}]
Since $\pms{\Delta}\neq\emptyset$, $MPS(\Delta)=\Delta$. Thus, $\pams{\Delta}=\bigcup_{S\in MPS(\Delta)}\pms{S}=\pms{\Delta}$. Therefore, Equation (\ref{sec4:eq2}) can be written as follows.
\begin{eqnarray}\label{sec4:eq4}
p(\alpha|\Delta)=\frac{\textstyle{\sum_{n}\pms{\alpha}_{m_{n}}\pms{\Delta}_{m_{n}}p(m_{n})}}{\textstyle{\sum_{n}\pms{\Delta}_{m_{n}}p(m_{n})}}=1
\end{eqnarray}
The denominator cannot be zero because of the assumption of $\pms{\Delta}\neq\emptyset$. From Equation (\ref{sec4:eq4}), $p(\alpha|\Delta)=1$ iff $\pms{\Delta}\subseteq\pms{\alpha}$.
\end{proof}
\begin{proof}[Corollary \ref{sec4:cor4}]
Since $\pms{\Delta}\neq\emptyset$, $MPS(\Delta)=\Delta$. Thus, $\pams{\Delta}=\bigcup_{S\in MPS(\Delta)}\pms{S}=\pms{\Delta}$. Namely, there is a possible model where all the elements of $\Delta$ are true. Therefore, there is a model where all the elements of $\Delta$ are true, i.e., $\ams{\Delta}=\ms{\Delta}$. Since $\pams{\Delta}=\ams{\Delta}$, we therefore have $\pams{\Delta}=\pms{\Delta}=\ams{\Delta}=\ms{\Delta}$. From Equation (\ref{sec4:eq1}), we have
\begin{eqnarray}\label{sec4:eq5}
p(\alpha|\Delta)=\frac{\textstyle{\sum_{n}\ms{\alpha}_{m_{n}}\ms{\Delta}_{m_{n}}p(m_{n})}}{\textstyle{\sum_{n}\ms{\Delta}_{m_{n}}p(m_{n})}}=1.
\end{eqnarray}
The denominator cannot be zero because $\ms{\Delta}=\pms{\Delta}\neq\emptyset$. We thus have $p(\alpha|\Delta)=1$ iff $\ms{\Delta}\subseteq\ms{\alpha}$.
\end{proof}
\begin{proof}[Corollary \ref{sec4:cor5}]
This is obvious from Corollary \ref{sec42:cor1} and Corollary \ref{sec4:cor3}.
\end{proof}
\end{document}